\documentclass[letterpaper]{article} 
\usepackage{aaai2026}  
\usepackage{times}  
\usepackage{helvet}  
\usepackage{courier}  
\usepackage[hyphens]{url}  
\usepackage{graphicx} 
\urlstyle{rm} 
\usepackage{natbib}  
\usepackage{caption} 
\frenchspacing  
\setlength{\pdfpagewidth}{8.5in} 
\setlength{\pdfpageheight}{11in} 
%
\pdfinfo{
/TemplateVersion (2026.1)
}

\usepackage{amsmath}
\usepackage{amssymb}
\usepackage{amsthm}
\usepackage{subcaption}
\usepackage{enumitem}
\usepackage{booktabs}
\usepackage{xcolor}

\setcounter{secnumdepth}{1} 

\title{Koopman Invariants as Drivers of Emergent Time-Series\\Clustering in Joint-Embedding Predictive Architectures}
\author{
    Pablo Ruiz-Morales\textsuperscript{\rm 1,\rm 3}, 
    Dries Vanoost\textsuperscript{\rm 2,\rm 3}, 
    Davy Pissoort\textsuperscript{\rm 2,\rm 3}, 
    Mathias Verbeke\textsuperscript{\rm 1,\rm 3}
}
\affiliations{
    \textsuperscript{\rm 1}Declarative Languages and Artificial Intelligence (DTAI), M-Group, KU Leuven, Bruges, Belgium\\
    \textsuperscript{\rm 2}ESAT-WaveCore, M-Group, KU Leuven, Bruges, Belgium\\
    \textsuperscript{\rm 3}Flanders Make@KU Leuven, Belgium\\
    
    \{pablo.ruizmorales, dries.vanoost, davy.pissoort, mathias.verbeke\}@kuleuven.be
%
}

\newcommand{\R}{\mathbb{R}}
\newcommand{\N}{\mathbb{N}}
\newcommand{\C}{\mathbb{C}}
\newcommand{\E}{\mathbb{E}}
\newcommand{\cX}{\mathcal{X}}
\newcommand{\cV}{\mathcal{V}}
\newcommand{\cK}{\mathcal{K}}
\newcommand{\ones}{\mathbf{1}}
\newcommand{\fEMA}{f_{\mathrm{EMA}}}
\newcommand{\fenc}{f_{\theta}}
\newcommand{\fpred}{g_{\phi}}
\newcommand{\paramtheta}{\theta}
\newcommand{\paramphi}{\phi}

\newtheorem{theorem}{Theorem}[section]
\newtheorem{lemma}[theorem]{Lemma}
\newtheorem{assumption}[theorem]{Assumption}

\theoremstyle{definition}
\newtheorem{definition}[theorem]{Definition}

\usepackage{fancyhdr}
\chead{\footnotesize This is a pre-print of the original paper accepted at the AAAI Conference on Artificial Intelligence 2026.}
\rhead{ }
\thispagestyle{fancy}

\nocopyright
\begin{document}

\maketitle

\begin{abstract}
Joint-Embedding Predictive Architectures (JEPAs), a powerful class of self-supervised models, exhibit an unexplained ability to cluster time-series data by their underlying dynamical regimes. We propose a novel theoretical explanation for this phenomenon, hypothesizing that JEPA's predictive objective implicitly drives it to learn the invariant subspace of the system's Koopman operator. We prove that an idealized JEPA loss is minimized when the encoder represents the system's regime indicator functions, which are Koopman eigenfunctions. This theory was validated on synthetic data with known dynamics, demonstrating that constraining the JEPA's linear predictor to be a near-identity operator is the key inductive bias that forces the encoder to learn these invariants. We further discuss that this constraint is critical for selecting this interpretable solution from a class of mathematically equivalent but entangled optima, revealing the predictor's role in representation disentanglement. This work demystifies a key behavior of JEPAs, provides a principled connection between modern self-supervised learning and dynamical systems theory, and informs the design of more robust and interpretable time-series models.
\end{abstract}


\section{Introduction}
Self-Supervised Learning (SSL) has emerged as a powerful paradigm for learning rich data representations from unlabeled data, driving significant progress across diverse domains \cite{Chen2020SimpleFS, Grill2020BootstrapYO, Chen2021}. These methods learn by solving pretext tasks, with the hope that the learned representations capture underlying structures of the data. However, despite their empirical power, the precise mechanisms by which some SSL architectures discover these structures often remain opaque, treating critical model components as black boxes. This underscores a pressing need for theoretical frameworks that can provide a principled understanding of their behavior and guide future development.

Among the diverse SSL strategies, Joint-Embedding Predictive Architectures (JEPAs) \cite{LeCun2022PathToAutonomousAI, Assran2023SelfSupervisedLA} offer a compelling approach. Rather than reconstructing raw inputs, JEPA is a non-generative approach that learns by predicting future data representations within an abstract, jointly-embedded latent space. This focus on abstract predictability has proven highly effective  across diverse domains, spanning from static images to dynamic video and time-series data \cite{verdenius2024latpfnjointembeddingpredictive, Assran2023SelfSupervisedLA, bardes2024revisiting}. 

Intriguingly, when applied to time-series data, JEPA models often yield latent embeddings that spontaneously cluster by underlying, unannotated dynamical regimes. This behavior is not universally observed in other representation learning frameworks, even those employing encoder architectures of similar capacity but with different objectives (e.g., reconstruction-based autoencoders). For instance, as empirically demonstrated in Section \ref{sec:results} (Figure \ref{fig:actual_jepa_vs_ae_clustering}), JEPA can effectively disentangle distinct dynamical modes where a comparable autoencoder (AE) fails to do so.

This pronounced difference in latent organization immediately poses a crucial question: Why does JEPA's predictive objective lead to this regime-aware clustering? What intrinsic mechanism within JEPA drives this emergence of order from apparently unstructured input? Addressing this question is vital not only for understanding JEPA itself but also for developing more principled and effective SSL methods.

To address this, we turn to dynamical systems theory, specifically the Koopman operator framework \cite{Koopman1931HamiltonianSA, Mezic2005SpectralPO}. The Koopman operator offers a powerful way to analyze nonlinear dynamical systems by lifting observations into a space where their evolution becomes linear. This approach has inspired various machine learning techniques aiming to learn these linear representations. For example, significant research has focused on using deep autoencoders to learn intrinsic coordinates where dynamics evolve linearly under a learned Koopman operator, often with auxiliary networks to enforce desired properties like parsimony or to capture continuous spectra \cite{Lusch2018, Yeung2019}. Architectures like Linearly Recurrent Autoencoder Networks explicitly constrain latent dynamics to follow a linear recurrent layer, effectively learning finite-dimensional Koopman approximations \cite{Azencot2020, Otto2019}. This philosophy of enforcing linear latent dynamics has achieved remarkable success in modern structured State-Space Models (SSMs) like Mamba \cite{gu2022efficientlymodelinglongsequences, gu2024mambalineartimesequencemodeling}, which represent the current state-of-the-art for many sequence modeling tasks. These methods, building upon foundations like Dynamic Mode Decomposition and its extensions \cite{DMD2010Schmid, Korda2018}, demonstrate the utility of Koopman theory for system identification and representation learning where linear evolution is a primary target.

Other approaches, such as VAMPnets \cite{VAMPNets2018Mardt} and Time-Lagged Autoencoders \cite{Wehmeyer2018}, are designed to learn slow collective variables or eigenfunctions of the underlying system's transfer operator, which are crucial for understanding long-timescale dynamics and regime transitions. A distinct but related goal in representation learning is to achieve disentanglement, where latent variables are explicitly regularized to capture independent factors of variation in the data \cite{higgins2017betavae, Kim2018}. While these methods successfully identify key dynamical or generative modes, they often employ specialized objectives directly tied to these targets.

In contrast to these approaches that explicitly model system dynamics, optimize for specific eigenfunctions, or regularize for disentangled factors, the mechanism within JEPA is implicit. While many self-supervised methods for time series rely on contrastive objectives that learn similarity based on temporal proximity \cite{Yue2022}, we hypothesize that JEPA's purely predictive objective in a latent space is what drives it to learn functions that are invariant under the system's evolution within distinct dynamical regimes. 

We will demonstrate that these invariant functions correspond to the indicator functions of these regimes, which are, in fact, eigenfunctions of the system's $\Delta$-step Koopman operator associated with a unit‑magnitude eigenvalue. Thus, while not designed as a Koopman modeling tool, JEPA's core learning principle appears to converge on identifying these fundamental Koopman invariants when distinct, stable dynamical regimes are present.

Our contributions are twofold:
\begin{enumerate}
\item We provide a theoretical derivation showing that an idealized JEPA loss function is minimized when its encoder learns to span the space of these Koopman-invariant regime indicators. This offers a first-principles explanation for the observed clustering.
\item We detail an empirical validation strategy using synthetic time-series data with known underlying regimes. Beyond standard t-SNE visualization, our methodology includes a novel set of analyses focused on the learned linear predictor matrix $M$ by examining its Frobenius norm difference from identity, symmetry, eigenvalue spectrum, and action on empirically derived cluster centroids to directly test the theoretical prediction that $M$ behaves as an identity operator on the learned regime subspace.
\end{enumerate}

By clarifying the mechanism behind JEPA's emergent clustering, this work not only offers a deeper understanding of this powerful SSL architecture but also strengthens the promising connections between modern deep learning and the established mathematics of dynamical systems. 

This paper is structured as follows: Section \ref{sec:framework} introduces JEPA and Koopman operator theory in more detail, as the foundation for the theoretical derivation in Section \ref{sec:theory_main}. Section \ref{sec:experiments} details the experimental setup for validation, followed by the obtained results and discussion in Section \ref{sec:results}. Finally, in Section \ref{sec:conclusion}, we synthesize our findings and outline promising directions for future research.

\section{Theoretical Framework}
\label{sec:framework}
To understand the emergent clustering phenomenon in JEPAs, we first need to formalize its learning process and then introduce the mathematical tools from dynamical systems theory that will allow us to analyze its behavior. Our central aim is to connect JEPA's predictive objective to the discovery of underlying dynamical regimes.

\subsection{JEPA Model and Idealized Loss}
\label{subsec:jepa_formalism}
We consider time-series windows 
$x_{t} = (s_{t}, \dots, s_{t+n-1}) \in \R^d \equiv \cX$, 
drawn from a stationary process with invariant measure $\mu$. The JEPA model consists of three neural network components:

\begin{itemize}
    \item An online encoder $\fenc: \cX \to \R^k$, which maps an input window $x_t$ to a latent representation $z_t=\fenc(x_t)$.
    \item An online predictor $\fpred: \R^k \to \R^k$, which takes $z_t$ and predicts the latent representation of a future window $x_{t+\Delta}$ (where $\Delta \geq 1$ is the prediction horizon).
    \item A target encoder $\fEMA: \cX \to \R^k$, structurally identical to $\fenc$, whose parameters $\theta_{EMA}$ are updated as an Exponential Moving Average (EMA) of the online encoder's parameters: $\theta_{EMA} \xleftarrow{} \alpha\,\theta_{EMA} + (1-\alpha)\,\theta$.
\end{itemize}

\begin{figure}
\centering
\includegraphics[width=\linewidth]{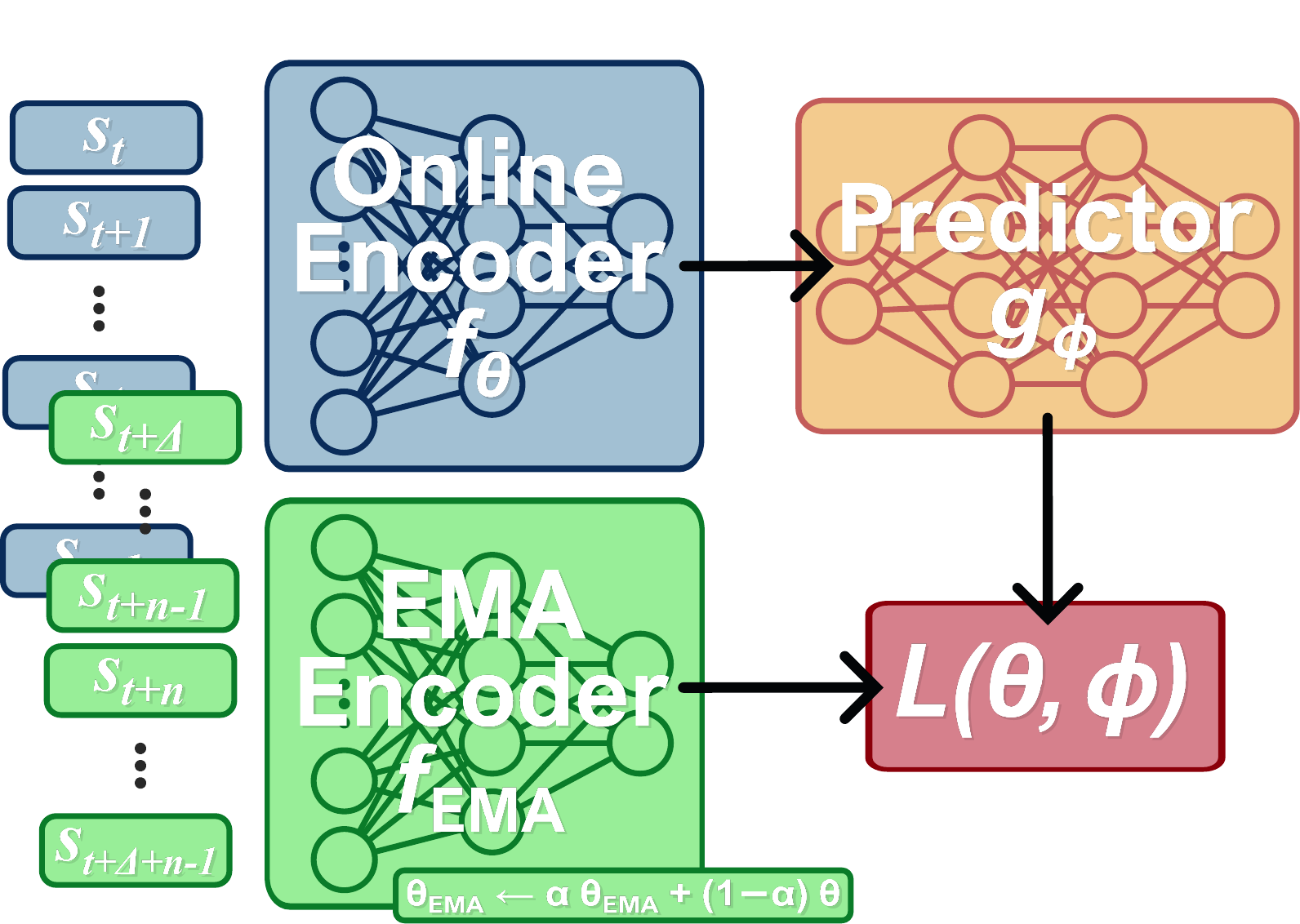}
\caption{Conceptual schematic of the Joint-Embedding Predictive Architecture (JEPA). The online encoder and predictor aim to predict the representation of a future target window as generated by the slowly evolving EMA encoder.}
\label{fig:jepa_schematic}
\end{figure}

The momentum coefficient $\alpha$ is typically chosen to be very close to 1 (e.g., 0.996 to 0.999), ensuring that $\theta_{EMA}$ represents a slowly evolving, stable average of $\theta$ \cite{Grill2020BootstrapYO, He2020}. The target encoder provides stable targets $z_{t+\Delta}=\fEMA(x_{t+\Delta})$ for training.

The online network parameters $\theta$ and $\phi$ are learned by minimizing the predictive loss, typically a squared Euclidean distance, averaged over the data distribution $\mu$ of the input windows:
 \begin{equation}
    L(\paramtheta, \paramphi) = \E_{x_t \sim \mu} \left[ \| \fpred(\fenc(x_t)) - \fEMA(x_{t+\Delta}) \|_2^2 \right]
    \label{eq:jepa_loss_full_framework}
\end{equation}

To perform a tractable theoretical analysis of the representations $\fenc$ JEPA learns, we make a key simplifying assumption. We assume that the target encoder $\fEMA$ closely tracks the online encoder $\fenc$. Specifically, if the rate of change of the online parameters $\theta$ per training step is small relative to $(1-\alpha)$, then the difference $\|\theta-\theta_{EMA}\|$ remains small. Consequently, for well-behaved neural network functions $f$, this implies $\fEMA(y) \approx \fenc(y)$ for any relevant input $y$. This approximation is common in the analysis of similar self-supervised methods employing EMA targets \cite{Grill2020BootstrapYO, He2020, tian2021understanding} and is implicitly validated by their empirical success, which relies on effective target stabilization. 

Under this condition of close tracking, the objective function in \eqref{eq:jepa_loss_full_framework} can be approximated by:
\begin{equation}
    L(f, g) \approx \E_{x_t \sim \mu} \left[ \| g(f(x_t)) - f(x_{t+\Delta}) \|_2^2 \right],
    \label{eq:jepa_loss_idealized_prelim_revised}
\end{equation}
where $f \equiv \fenc$ and $g \equiv \fpred$. Our theoretical development will be based on this idealized loss, while acknowledging that the EMA dynamics introduce a slight deviation in practice.

\subsection{Koopman Operator}
\label{subsec:koopman_formalism}
To analyze the functions $f$ that JEPA might learn, particularly in the context of time-series data exhibiting underlying dynamical regimes, we employ the Koopman operator framework. We view the sequence of windows $\{x_t\}$ as states of a discrete-time dynamical system on $\cX$, evolving under the invariant measure $\mu$.

The Koopman operator provides a way to study the evolution of functions (observables) $\psi : \cX \xrightarrow{} \C$ defined on the state space. We focus on observables in the Hilbert space $L^2(\cX, \mu)$.

\begin{definition}[$\Delta$-Step Koopman Operator]
The $\Delta$-step Koopman operator $\cK \equiv \cK_{(\Delta)} : L^2(\cX, \mu) \xrightarrow{} L^2(\cX, \mu)$ transforms an observable $\psi$ into its conditional expectation $\Delta$ steps into the future:
\begin{equation}
(\cK\psi)(x) = \E[\psi(x_{t+\Delta}) \mid x_t = x].
\label{eq:koopman_def_framework}
\end{equation}
\end{definition}

A fundamental property of $\cK$ is its linearity, regardless of the non-linearity of the underlying system generating $x_t$. It is also a contraction on $L^2(\cX, \mu)$. Of particular interest are its eigenfunctions $\psi_j$, which satisfy $\cK \psi_j=\nu_j\psi_j$ for eigenvalues $\nu_j \in \C$. Eigenfunctions with $\nu_j=1$ represent quantities that are invariant in expectation under the $\Delta$-step dynamics. The components of JEPA's learned encoder $\fenc(x_t)$ can be seen as a vector of observables, and understanding their relationship with the Koopman operator is key to our analysis.

\section{Theoretical Justification for Clustering}
\label{sec:theory_main}
Having established the JEPA learning paradigm and the Koopman operator framework, we now develop our central theoretical argument. We will demonstrate that under specific assumptions about the data dynamics, \eqref{eq:jepa_loss_idealized_prelim_revised} incentivizes the encoder $f$ to learn representations that correspond to underlying, discrete dynamical regimes. This occurs because these regime-specific representations are characterized by functions that are invariant under the Koopman operator, making them optimally predictable.

\subsection{Dynamical Regimes and Invariant Observables}
\label{subsec:regimes_and_invariants}
The cornerstone of our argument is the assumption that the observed time-series data, while potentially complex, arises from a system that operates in a finite number of distinct dynamical modes or regimes.

\begin{assumption}[Finite Mixture of Ergodic Regimes] 
\label{ass:mixture_model_theory}
The invariant measure $\mu$ governing the time-series windows $x_t \in \cX$ decomposes as a finite convex mixture of $r$ distinct ergodic component measures: 
\begin{equation}
    \mu = \sum_{i=1}^r \alpha_i \mu_i, ~ \text{where} ~ r \in \N, ~ \alpha_i > 0,  \sum_{i=1}^r\alpha_i = 1
\end{equation}

Each $\mu_i$ is an ergodic invariant measure supported on a measurable set $\cX_i \subset \cX$. These supports $\{\cX_i\}_{i=1}^{r}$ are essentially disjoint, and trajectories starting in $\cX_i$ remain confined to $\cX_i$ for all future times (dynamical immiscibility).
\end{assumption}

This assumption allows us to define regime indicator functions $\chi_i(x) := \ones_{\cX_i}(x)$. These functions are linearly independent and span an $r$-dimensional subspace $\cV := \mathrm{span}\{\chi_1, \dots, \chi_r\}$.

\begin{lemma}[Properties of Regime Indicators and $\cV$] \label{lemma:V_properties_theory}
Under Assumption \ref{ass:mixture_model_theory}:
\begin{enumerate}[label=(\alph*)]
    \item Each regime indicator $\chi_i$ is an eigenfunction of $\cK$ with eigenvalue 1: $\cK \chi_i=\chi_i$
    \item Each $\chi_i$ is pathwise invariant over $\Delta$ steps: $\chi_i(x_{t+\Delta})=\chi_i(x_{t})$
    \item Consequently, any function $\psi \in \cV$ satisfies $\cK\psi=\psi$ and $\psi(x_{t+\Delta})=\psi(x_{t})$
    \item The subspace $\cV$ is precisely the eigenspace of $\cK$ corresponding to the eigenvalue $1$, and its dimension is $r$.
\end{enumerate}
\end{lemma}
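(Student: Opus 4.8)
The plan is to prove each of the four claims in sequence, with (a) and (b) as the foundational results from which (c) and (d) follow.

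\textbf{Claims (a) and (b).} The plan is to start from the dynamical immiscibility condition in Assumption~\ref{ass:mixture_model_theory}: a trajectory starting in $\cX_i$ stays in $\cX_i$ for all future times. This means that for any $x_t \in \cX_i$, the future window $x_{t+\Delta}$ is again in $\cX_i$, so $\chi_i(x_{t+\Delta}) = 1 = \chi_i(x_t)$; conversely if $x_t \notin \cX_i$ it lies in some other (essentially disjoint) support and stays out of $\cX_i$, giving $\chi_i(x_{t+\Delta}) = 0 = \chi_i(x_t)$. This establishes the pathwise invariance (b) directly, holding $\mu$-almost surely. Claim (a) then follows by taking conditional expectations: applying the definition $(\cK\chi_i)(x) = \E[\chi_i(x_{t+\Delta}) \mid x_t = x]$ and substituting the pathwise identity $\chi_i(x_{t+\Delta}) = \chi_i(x_t)$ inside the expectation yields $(\cK\chi_i)(x) = \E[\chi_i(x_t) \mid x_t = x] = \chi_i(x)$. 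Hence $\cK\chi_i = \chi_i$, i.e. $\chi_i$ is an eigenfunction with eigenvalue $1$.

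\textbf{Claim (c).} This is an immediate consequence of linearity. Since any $\psi \in \cV$ can be written $\psi = \sum_{i=1}^r c_i \chi_i$ for scalars $c_i$, and $\cK$ is linear (a property noted in Section~\ref{subsec:koopman_formalism}), I would compute $\cK\psi = \sum_i c_i \cK\chi_i = \sum_i c_i \chi_i = \psi$ using (a); the pathwise statement $\psi(x_{t+\Delta}) = \psi(x_t)$ follows the same way from (b).

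\textbf{Claim (d).} This is where the real work lies. The inclusion $\cV \subseteq \ker(\cK - I)$ is already given by (c), and $\dim \cV = r$ follows from the linear independence of the $\chi_i$ (which holds because their supports are essentially disjoint with positive $\mu$-measure). The substantive direction is the reverse inclusion: any eigenfunction $\psi$ with $\cK\psi = \psi$ must lie in $\cV$. The plan here is to exploit ergodicity of each component $\mu_i$. Restricting attention to a single support $\cX_i$, the condition $\cK\psi = \psi$ says $\psi$ is invariant in conditional expectation under the $\Delta$-step dynamics on $\cX_i$; I would invoke the fact that on an ergodic component, the only $L^2$ functions invariant under the dynamics (in the mean-square / conditional-expectation sense, via a standard $L^2$ contraction argument showing $\|\cK\psi\| = \|\psi\|$ forces $\psi$ constant on the ergodic set) are the constants $\mu_i$-almost everywhere. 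Thus $\psi$ equals some constant $c_i$ on each $\cX_i$, which means $\psi = \sum_i c_i \chi_i \in \cV$. I expect this ergodicity-to-constancy step to be the main obstacle, since it requires care: one must argue that eigenvalue-$1$ eigenfunctions of the Koopman (transfer) operator are constant on ergodic components, typically via the variational/contraction property of $\cK$ on $L^2(\cX,\mu)$ together with the equality case of the Cauchy--Schwarz or Jensen inequality for conditional expectations. Establishing that equality in the contraction bound forces the function to be measurable with respect to the invariant $\sigma$-algebra — which ergodicity collapses to the trivial algebra on each component — is the technically delicate part, and I would either prove it via the spectral characterization of $\cK$ or cite the standard ergodic-theoretic result.
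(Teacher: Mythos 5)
Your proposal is correct and follows essentially the same route as the paper's proof: the dynamical-immiscibility case analysis establishes (a) and (b) (you merely prove (b) first and obtain (a) by taking conditional expectations, where the paper runs the same case analysis for each), linearity of $\cK$ gives (c), and for (d) linear independence of the $\chi_i$ combined with the standard ergodic-theoretic fact that the eigenvalue-$1$ eigenspace of the Koopman operator has dimension equal to the number of ergodic components. The only cosmetic difference is that you sketch how to prove that ergodic fact (via the Jensen/contraction equality case forcing eigenfunctions to be constant on each ergodic component), whereas the paper simply cites it; both are acceptable, and your sketch of that step is sound.
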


\begin{proof}[Proof Intuition]
Properties (a) and (b) stem directly from the dynamical immiscibility of regimes: if a system is in regime $\cX_i$, it is expected to remain there, and its indicator $\chi_i$ will deterministically remain 1. Property (c) follows by linearity. Property (d) is a fundamental result from ergodic decomposition theory, linking the number of ergodic components to the dimension of the invariant subspace of the Koopman operator. The detailed proofs are provided in Appendix~\ref{app:proof_lemma_V_properties_detailed}.
\end{proof}

Lemma \ref{lemma:V_properties_theory}(c) is critical: functions in $\cV$ are not just invariant in expectation but are also perfectly predictable on a pathwise basis with respect to the regimes, i.e., their future value $f(x_{t+\Delta})$ is identical to their current value $f(x_{t})$. This makes them prime candidates for what JEPA might learn.

\subsection{JEPA Loss Minimization and Koopman Invariants}
\label{subsec:jepa_loss_koopman}
We now connect the idealized loss \eqref{eq:jepa_loss_idealized_prelim_revised} to the learning of these Koopman-invariant functions. For analytical clarity, especially in isolating the encoder's role in finding predictable structures, we introduce an assumption about the predictor.

\begin{assumption}[Linear Predictor] \label{ass:linear_predictor_theory}
The predictor $g: \R^k \to \R^k$ is a linear transformation, i.e., $g(z) = Mz$ for some matrix $M \in \R^{k \times k}$ with learnable parameters.
\end{assumption}

Under this assumption, let $f(x) = \vec{\psi}(x)$ be the vector of $k$ observables learned by the encoder. The idealized JEPA loss \eqref{eq:jepa_loss_idealized_prelim_revised} becomes $L(f, M) = \E_{x \sim \mu}\bigl[\|M \vec{\psi}(x) - \vec{\psi}(x_{t+\Delta})\|_2^2\bigr]$. This loss can be decomposed using the Koopman operator $\cK$, as derived in Appendix~\ref{app:loss_decomposition_theory_detailed}:
\begin{equation}
\begin{split}
L(f, M) &= \underbrace{\E_x \bigl[ \|M \vec{\psi}(x) - (\cK\vec{\psi})(x)\|_2^2 \bigr]}_{\text{Term 1: Mean Prediction Error}} \\
&\quad + \underbrace{\E_x \bigl[ \|(\cK\vec{\psi})(x) - \vec{\psi}(x_{t+\Delta})\|_2^2 \bigr]}_{\text{Term 2: Inherent Stochasticity Error}}
\end{split}
\label{eq:loss_decomp_koopman_theory}
\end{equation}

JEPA aims to minimize $L(f,M)$ by jointly optimizing $f$ (i.e., $\vec{\psi}$) and $M$. The loss is zero if and only if both Term 1 and Term 2 are zero. Term 2 quantifies the degree to which the learned observables $\vec{\psi}(x)$ deviate from their conditional expectation $(\cK\vec{\psi})(x)$ along actual trajectories. Term 1 quantifies how well the linear predictor $M \vec{\psi}(x)$ can match this conditional expectation.

Our theory shows that functions spanning the regime-invariant subspace $\cV$ are optimal for minimizing this loss.

\begin{theorem}[JEPA Learns Regime Indicators] \label{thm:jepa_learns_V_theory}
Let Assumptions \ref{ass:mixture_model_theory} and \ref{ass:linear_predictor_theory} hold. Assume the encoder has sufficient capacity, i.e., its latent dimension $k \ge r$.
The JEPA loss \eqref{eq:loss_decomp_koopman_theory} achieves its global minimum if and only if:
\begin{enumerate}[label=(\alph*)]
    \item The components $f_j$ of the encoder output are such that $(\cK f_j)(x)=f_j(x_{t+\Delta})$ for $\mu$ almost everywhere (a.e.).
    \item The predictor matrix $M$ satisfies $Mf(x)=(\cK f)(x)$ for $\mu$ a.e. $x$.
\end{enumerate}

These conditions are simultaneously satisfied if the components $f_j(x)$ of the encoder $f(x)$ belong to the invariant subspace $\cV$, and the predictor matrix $M$ acts as the identity transformation on the subspace of $\R^k$ spanned by $f(\cX)$.

Specifically, if $f^*(x)=(\chi_1(x), \dots, \chi_r(x), \vec{0}_{k-r})^{T}$, then $L(f^*,M)$ is minimized by any $M^*$ whose action on the subspace spanned by the non-zero components of $f^*(\cX)$ is identity and zero elsewhere.
\end{theorem}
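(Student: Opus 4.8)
The plan is to build the entire argument on top of the loss decomposition \eqref{eq:loss_decomp_koopman_theory}, exploiting that both of its terms are expectations of squared Euclidean norms and hence non-negative. Since Term 2 depends only on the encoder $\vec{\psi}=f$ (not on $M$) while Term 1 couples $f$ and $M$, I would treat them as two independent non-negativity constraints: $L(f,M)\ge 0$ always, with equality if and only if both terms vanish $\mu$-a.e. The first step is therefore to establish the characterization $L(f,M)=0 \iff \text{Term 1}=\text{Term 2}=0$, and then to read off each vanishing condition as the corresponding statement (a)/(b). Term 2 $=\E_x[\|(\cK\vec{\psi})(x)-\vec{\psi}(x_{t+\Delta})\|_2^2]$ is zero exactly when $(\cK f_j)(x)=f_j(x_{t+\Delta})$ holds a.e. for every component, i.e. condition (a); this is precisely the statement that the future observable is $\mu$-a.s. a deterministic function of the present state, so its conditional variance (the irreducible stochasticity) vanishes. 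Term 1 $=\E_x[\|M\vec{\psi}(x)-(\cK\vec{\psi})(x)\|_2^2]$ is zero exactly when $M\vec{\psi}(x)=(\cK\vec{\psi})(x)$ a.e., i.e. condition (b).

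The crucial bridge is to certify that this shared minimum value is in fact $0$ and is attained, so that ``global minimizer'' and ``zero-loss solution'' coincide. For this I would invoke Lemma \ref{lemma:V_properties_theory}(c): any $\psi\in\cV$ satisfies both $\cK\psi=\psi$ and the pathwise identity $\psi(x_{t+\Delta})=\psi(x_t)$. Taking an encoder whose components all lie in $\cV$ (feasible since the capacity condition $k\ge r$ lets us embed all $r$ regime indicators) gives $(\cK f_j)(x)=f_j(x)=f_j(x_{t+\Delta})$, so condition (a) holds and Term 2 vanishes; and since $\cK\vec{\psi}=\vec{\psi}$, choosing $M$ to act as the identity on $\mathrm{span}(f(\cX))$ forces $M\vec{\psi}(x)=\vec{\psi}(x)=(\cK\vec{\psi})(x)$, so condition (b) holds and Term 1 vanishes. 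This exhibits an explicit pair with $L=0$, proving the infimum is attained at $0$. Combining this with $L\ge 0$ yields the biconditional: $(f,M)$ is a global minimizer iff $L(f,M)=0$ iff both terms vanish iff (a) and (b) both hold.

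For the concrete claim I would specialize to $f^*(x)=(\chi_1(x),\dots,\chi_r(x),\vec{0}_{k-r})^{T}$ and compute $f^*(\cX)$ explicitly. By the essential disjointness of the supports $\{\cX_i\}$ in Assumption \ref{ass:mixture_model_theory}, for $x\in\cX_i$ one has $\chi_j(x)=\delta_{ij}$, hence $f^*(x)=e_i$ and $\mathrm{span}(f^*(\cX))=\mathrm{span}\{e_1,\dots,e_r\}$. Since $\cK f^*=f^*$, condition (b) reduces to $M^* e_i=e_i$ for $i=1,\dots,r$, i.e. $M^*$ must restrict to the identity on this $r$-dimensional coordinate subspace; its action on the orthogonal complement only ever multiplies the zero block of $f^*$ and is therefore immaterial to the loss, so the canonical ``zero elsewhere'' choice $M^*=\mathrm{diag}(I_r,0_{k-r})$ is one admissible minimizer, establishing the final sentence.

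I expect the main obstacle to be not the algebra but the logical hinge of the second step: rigorously certifying that the attained infimum equals $0$ rather than merely characterizing where the two non-negative terms vanish. Non-negativity alone leaves open whether a joint minimizer over $(f,M)$ exists and whether the two vanishing conditions are simultaneously satisfiable; resolving this requires Assumption \ref{ass:mixture_model_theory} (so that $\cV$ is nonempty and $r$-dimensional), Lemma \ref{lemma:V_properties_theory}(c) (pathwise invariance, which is strictly stronger than invariance-in-expectation and is exactly what kills Term 2), and the capacity bound $k\ge r$. A secondary subtlety worth flagging is the measure-theoretic bookkeeping: Term 2 lives on the joint law of $(x_t,x_{t+\Delta})$ whereas Term 1 lives on the marginal $\mu$, and the cross-term cancellation underpinning the decomposition itself rests on $\cK\vec{\psi}$ being the $L^2(\mu)$-orthogonal projection of $\vec{\psi}(x_{t+\Delta})$ onto functions of $x_t$; I would state this orthogonality explicitly to justify reading conditions (a) and (b) directly off the two terms.
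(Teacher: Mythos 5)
Your proposal is correct and takes essentially the same route as the paper's own proof: the same two-term non-negativity argument equating the global minimum (zero loss) with conditions (a) and (b), Lemma \ref{lemma:V_properties_theory}(c) for sufficiency, and the same analysis of $f^*$ via $f^*(x)=e_i$ on $\cX_i$, forcing $M^*e_i=e_i$ on $\mathrm{span}\{e_1,\dots,e_r\}$ while leaving the blocks acting on the zero components arbitrary (the paper writes this as $M_{11}=I_r$, $M_{21}=\mathbf{0}$, with $M_{12},M_{22}$ free). Your explicit insistence on certifying that the infimum is attained at zero, and on the $L^2(\mu)$-projection orthogonality underlying the cross-term cancellation, is if anything slightly more careful than the paper's presentation, which handles both points implicitly.
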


\begin{proof}[Proof Intuition]
If each component $f_j \in \cV$, then by Lemma \ref{lemma:V_properties_theory}(c), we have both $(\cK f_j)(x)=f_j(x)$ and $f_j(x_{t+\Delta})=f_j(x_{t})$.

Thus, if the encoder learns functions $f_j$ that are (or span) the regime indicators in $\cV$, both terms of the loss can be driven to zero with a predictor $M$ that effectively acts as an identity map on these learned, invariant representations. 

For the specific $f^*$ given, $M^*$ being identity on the first $r$ components achieves this. Conversely, for the loss to be zero, Term 2 requires $(\cK f_j)(x_t)=f_j(x_{t+\Delta})$ almost surely (a.s.). If $M$ then ensures Term 1 is zero by $Mf_j=\cK f_j$, and if $M$ is to be simple (e.g., identity-like for these $f_j$), it implies $\cK f_j=f_j$, pushing $f_j$ towards $\cV$. The full details are provided in Appendix~\ref{app:proof_thm_jepa_learns_V_detailed}.
\end{proof}

\subsection{Implication for Latent Space Clustering}
\label{subsec:clustering_implication}
Theorem \ref{thm:jepa_learns_V_theory} provides a direct explanation for the empirically observed clustering. If the JEPA encoder $f(x)$, with latent dimension $k \ge r$, learns representations whose components $f_j$ span the $r$-dimensional invariant subspace $\cV$, then $f(x)$ effectively becomes a representation of the regime indicators.

For instance, if $f(x)$ is an invertible linear transformation of the vector $\vec{\chi}(x) = (\chi_1(x), \dots, \chi_r(x))^T$, say $f(x) = A \vec{\chi}(x)$ for a $k \times r$ matrix $A$ of rank $r$. When a window $x$ belongs to regime $\cX_i$, $\vec{\chi}(x)$ becomes $e_i$ (the $i$-th standard basis vector in $\R^r$). Consequently, the encoder output is $f(x) = A e_i$, which is simply the $i$-th column of matrix $A$.

Therefore, all input windows $x_t$ originating from the same dynamical regime $\cX_i$ are mapped by the encoder $f$ to the same space (or a very tight region, allowing for approximation errors and noise) $A e_i$ in the latent space $R^k$. Since $A$ has rank $r$, the $r$ vectors $\{A e_i\}_{i=1}^r$ are distinct (or linearly independent if $k \ge r$). This mapping naturally results in $r$ distinct clusters in the latent space, with each cluster corresponding precisely to one of the underlying regimes.

While our core theoretical result relies on the assumption of a linear predictor $g(z)=Mz$ for analytical tractability, the general intuition extends to non-linear predictors. A sufficiently expressive non-linear predictor $\fpred$ would aim to approximate the conditional expectation $\fpred(f(x_t)) \approx \E[f(x_{t+\Delta}) \mid f(x_t)]$. If the encoder $f$ learns representations $z_t = f(x_t)$ that are elements of $\cV$, then $z_{t+\Delta} = z_t$ a.s. In this scenario, the conditional expectation $\E[z_{t+\Delta} \mid z_t] = z_t$. Thus, an optimal non-linear predictor would learn to approximate an identity map, $\fpred(z_t) \approx z_t$, for these highly predictable, regime-specific representations. The fundamental drive to find representations $f(x_t)$ for which $f(x_{t+\Delta})$ is ``simply" predictable from $f(x_t)$ remains, and functions in $\cV$ (for which $f(x_{t+\Delta})=f(x_t)$) represent the epitome of such simplicity.

\section{Experimental Validation}
\label{sec:experiments}
To rigorously test the predictions of our Koopman-based theory, we employ a novel empirical strategy. Using a synthetic dataset with known underlying regimes, we perform a series of targeted, quantitative analyses on the learned linear predictor, $M$, to verify its predicted behavior as an identity operator on the learned invariant subspace.

\subsection{Synthetic Dataset Generation}
\label{subsec:dataset_generation}
To create an environment where Assumption \ref{ass:mixture_model_theory} (Finite Mixture of Ergodic Regimes) is satisfied by construction, we generate a synthetic dataset comprising $r=18$ distinct dynamical regimes. The objective is to provide the JEPA model with input data that clearly embodies the structured, multi-modal dynamics our theory addresses. Each regime is designed to exhibit unique temporal characteristics, ensuring clear distinctions and facilitating the analysis of JEPA's ability to differentiate them.

Master sequences, each of length $L_{master} = 1024$ time steps, are generated for every regime. For these experiments, additive observation noise is set to zero for all deterministic signal components, ensuring the cleanest possible testbed to verify our theoretical claims without the confounding effects of stochastic observation noise. Stochastic processes, such as Autoregressive Moving Average (ARMA) models, naturally incorporate their own intrinsic process noise.

The repertoire of $r=18$ regimes encompasses a diverse set of dynamics crucial for testing the robustness of our theory: Periodic signals include several sinusoidal variations differing in frequency, amplitude, and harmonics, with phases randomized per sequence. Square waves and a sawtooth wave provide examples of non-smooth periodicities. 

To model stochastic dynamics, we include Autoregressive (AR) models with varying dependency coefficients, a Moving Average (MA) process, and a mixed ARMA model.

Aperiodic and event-based signals are represented by linear trends with both positive and negative slopes, featuring per-sequence randomization of slope and intercept, and sequences with sparse, randomly located positive pulses. Finally, to explore more complex interactions, the dataset includes combined signals like a sinusoid superimposed on a linear trend, and a high-noise variant consisting of a sinusoid with significantly increased internal process noise. Examples of each type of waveform are depicted in Figure \ref{fig:dataset_waveforms}, and a comprehensive list of parameters for each generative process is detailed in Appendix \ref{app:regime_params}.

\begin{figure}[t]
\centering
\includegraphics[width=\linewidth]{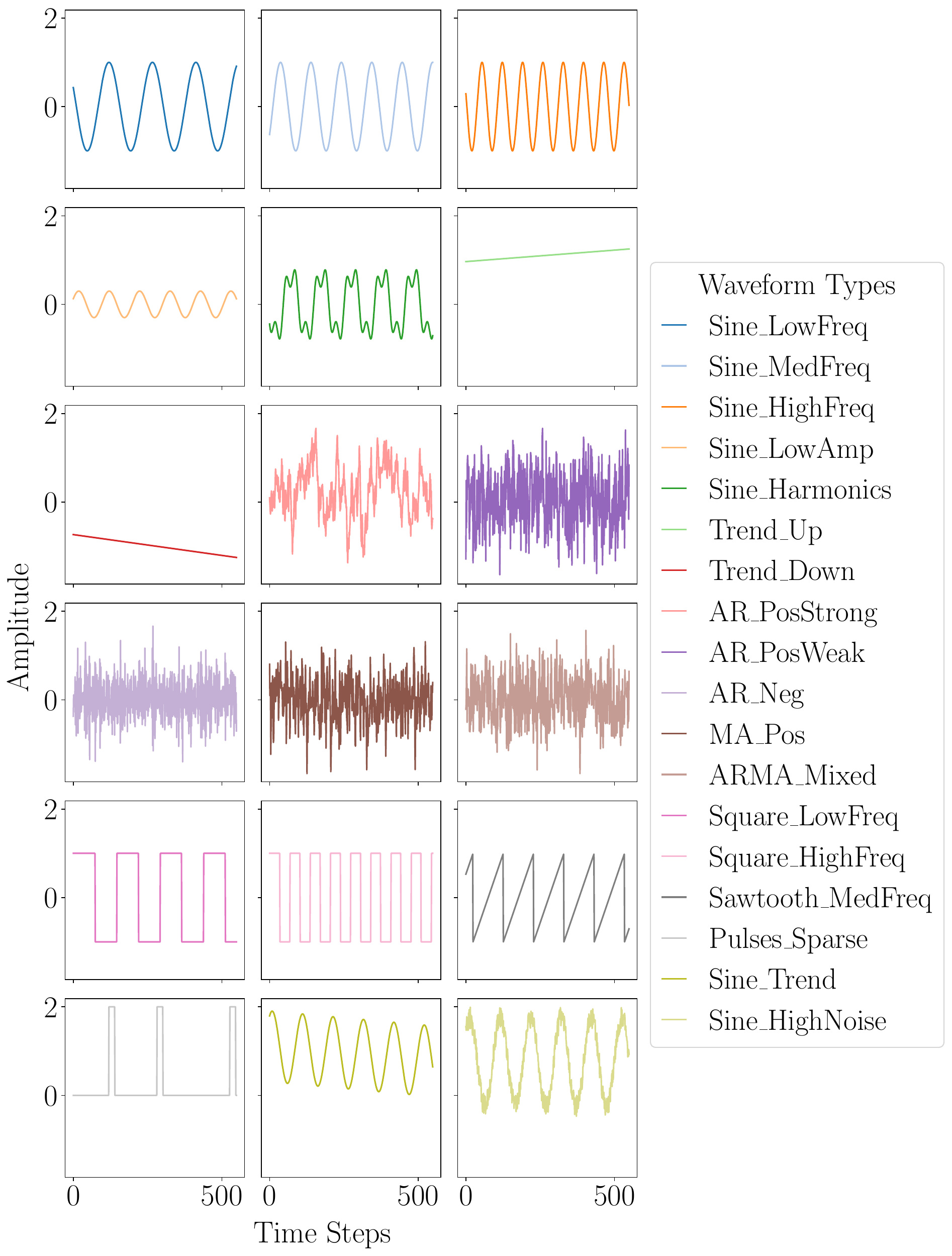}
\caption{Example waveforms illustrating the diversity of the synthetic dataset. }
\label{fig:dataset_waveforms}
\end{figure}

From each master sequence, we extract a single context-target pair.  The context window, $x_{t}$, consists of the initial $n_c = 768$ steps. The target window, $x_{t+\Delta}$, is defined by shifting forward by a prediction horizon of $\Delta=256$ steps, also taking a 768-step window.

This windowing scheme creates a substantial overlap: the steps from $s_{\Delta}$ to $s_{n_c-1}$ are present in both the context and target. This design serves a dual purpose: it tasks the model with maintaining representational consistency for the overlapping data, while also requiring it to predict the representation of the novel future segment ($s_{n_c}$ to $s_{\Delta+n_c-1}$). We note that experiments with a completely non-overlapping windowing scheme ($\Delta \ge n_c$) also yielded satisfactory results.

A total of 10,000 sequences are generated for each of the $r=18$ regimes, yielding 180,000 context-target pairs. This dataset is deterministically partitioned at the sequence level into training (70\%), validation (20\%), and test (10\%) sets, with all regime types proportionally represented across splits. Prior to window extraction, each master sequence undergoes per-sequence standardization to encourage the model to learn shape-based and relative dynamical features.

\subsection{JEPA Model Configuration}
\begin{figure*}[htbp]
    \centering
    \includegraphics[width=\textwidth]{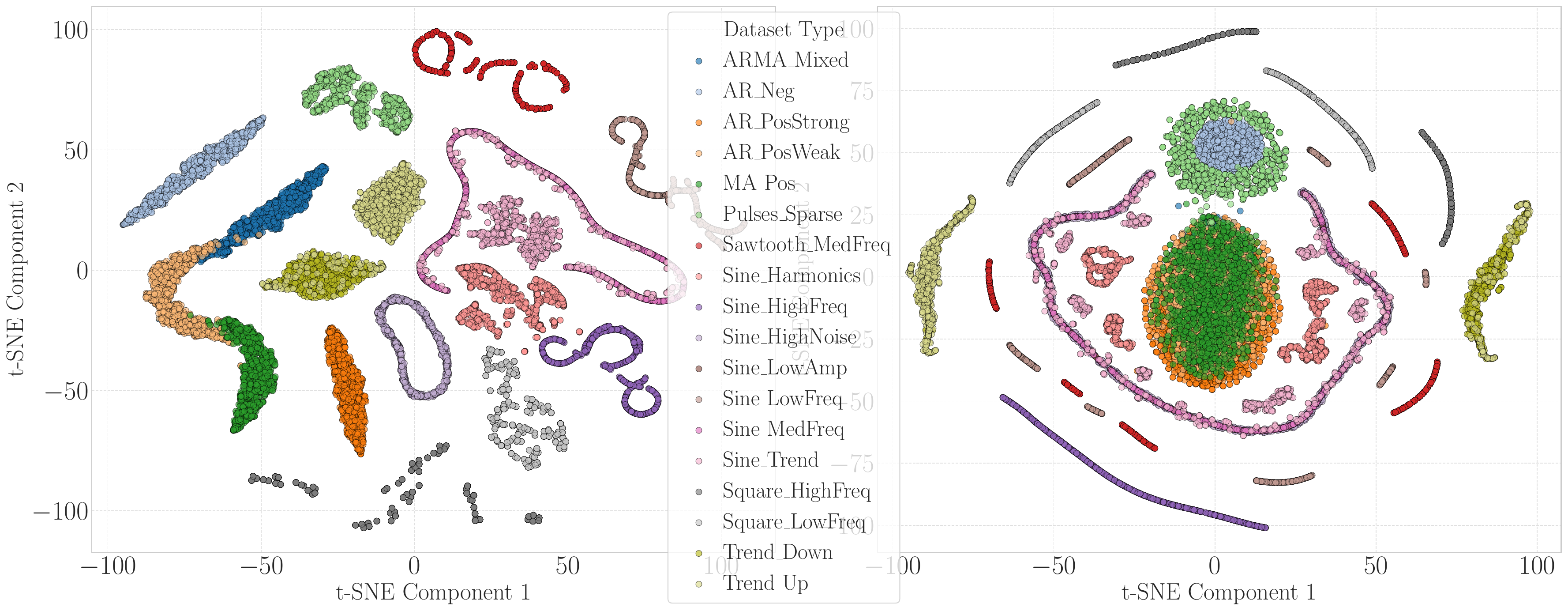}
    \caption{Latent space visualization (t-SNE) of test set embeddings. Left: JEPA embeddings often form distinct clusters corresponding to underlying dynamical regimes (indicated by colors). Right: Embeddings from a conventional autoencoder with an identical encoder architecture may not exhibit such clear regime-based separation on the same data.}
    \label{fig:actual_jepa_vs_ae_clustering}
\end{figure*}

\label{subsec:model_config}
To instantiate the JEPA model whose idealized behavior was analyzed in Section \ref{sec:theory_main}, and to empirically test our theoretical predictions, we configure the encoder and predictor architectures with specific considerations for this study. All models are implemented in PyTorch.

The encoder $\fenc$ is a one-dimensional Convolutional Neural Network designed to process the input time-series windows $x_t$. Its architecture consists of four convolutional layers, each followed by ReLU activation. The output of the convolutional blocks is flattened and passed through a linear projection head to produce a latent representation $z_t \in \R^k$. The latent dimension $k$ was set to 32, ensuring $k \ge r$, providing sufficient capacity for the encoder to potentially represent the $r$ distinct regimes as predicted by our theory. 

A key component for directly testing the predictions regarding the predictor's behavior (Theorem \ref{thm:jepa_learns_V_theory}) involves using a linear predictor for $\fpred$. This predictor implements a linear transformation $g(z)=Mz$, where $M \in \R^{k\times k}$, and was configured without bias to simplify analysis. To specifically probe for the existence and stability of the theoretically optimal identity-like solution, our primary analysis stems from experiments where $M$ was initialized as an identity matrix. To explore the broader optimization landscape under more standard conditions, separate control experiments were conducted where $M$ was initialized using a standard random scheme. This dual approach allows us to both verify the existence of an interpretable solution and assess its uniqueness.

For comparison, and to observe clustering under more typical JEPA conditions, we also configure experiments using a standard non-linear multi-layer perceptron (MLP) predictor. This predictor consists of two hidden layers with ReLU activations, where the hidden dimension is two times the latent dimension $k$. The full details of the model are provided in Appendix~\ref{app:model_architectures}.

The target encoder $\fEMA$ is a direct copy of the online encoder $\fenc$, with its parameters updated via an Exponential Moving Average (EMA) using a decay rate of $\alpha=0.996$.

\section{Results and Discussion}
\label{sec:results}
Our analysis focuses on verifying the key predictions derived from our Koopman operator-based theory: namely, the emergence of regime-aligned clustering in the latent space and the characteristic behavior of the learned predictor, particularly in the idealized linear case. All reported results are from evaluations on the held-out test set.

A primary outcome of our theory is that JEPA's encoder $\fenc$ should learn representations $z_t=\fenc(x_t)$ that distinguish the underlying dynamical regimes, leading to distinct clusters in the latent space. To investigate this, we projected the $k$-dimensional latent embeddings of test set windows into a 2D space using t-SNE \cite{vanderMaaten2008}. The points were then color-coded according to their ground-truth regime labels.

Figure \ref{fig:actual_jepa_vs_ae_clustering} presents these visualizations. As hypothesized, the JEPA model trained with a standard non-linear MLP predictor demonstrates a clear formation of clusters that align strongly with the ground-truth dynamical regimes. We quantified this alignment using K-Means clustering ($\text{K}=18$), which revealed a mean cluster purity of $65.48\%$ for JEPA's embeddings. This emergent order supports our argument that the predictive objective learns features separable by the underlying data-generating modes.

In contrast, a conventional autoencoder with an identical encoder architecture fails to separate the dynamical regimes for the same data, achieving a mean cluster purity of only $38.81\%$. This difference highlights the efficacy of JEPA's abstract predictive objective over a purely reconstructive one for uncovering and organizing representations by their underlying dynamical structure.

\subsection{Analysis of the Learned Linear Predictor Matrix}
\label{subsec:results_matrix_M}

Our theory (Theorem \ref{thm:jepa_learns_V_theory}), under the assumption of a linear predictor $g(z)=Mz$, implies that if the encoder learns regime indicator functions (elements of $\cV$), then $M$ should act as an identity transformation on the subspace spanned by these learned representations. We investigate this by analyzing the $k \times k$ matrix $M$ learned by the JEPA model equipped with a linear predictor (initialized as $M \approx I_k$).

The learned matrix $M$ from our identity-initialized experiment converged to a near-perfect identity transformation. This was confirmed through three key quantitative properties. First, its deviation from the identity matrix $I_k$ was minimal, with a relative Frobenius error ($||M - I_k||_F / ||M||_F$) of just $2.34\%$. Second, the matrix was highly symmetric, another crucial property of an identity operator, with its skew-symmetric relative norm ($||M - M^T||_F / ||M||_F$) measuring only $2.06\%$. Finally, and most critically, its eigenvalue spectrum, displayed in Figure~\ref{fig:M_heatmap_eigen}, reveals the mechanism behind this behavior: it is dominated by $r$ eigenvalues near 1.0, indicating that $M$ has learned to preserve the $r$-dimensional subspace of the learned regimes while potentially attenuating all other, less predictable dimensions.

\begin{figure}[tbp]
  \centering
  \includegraphics[width=\linewidth]{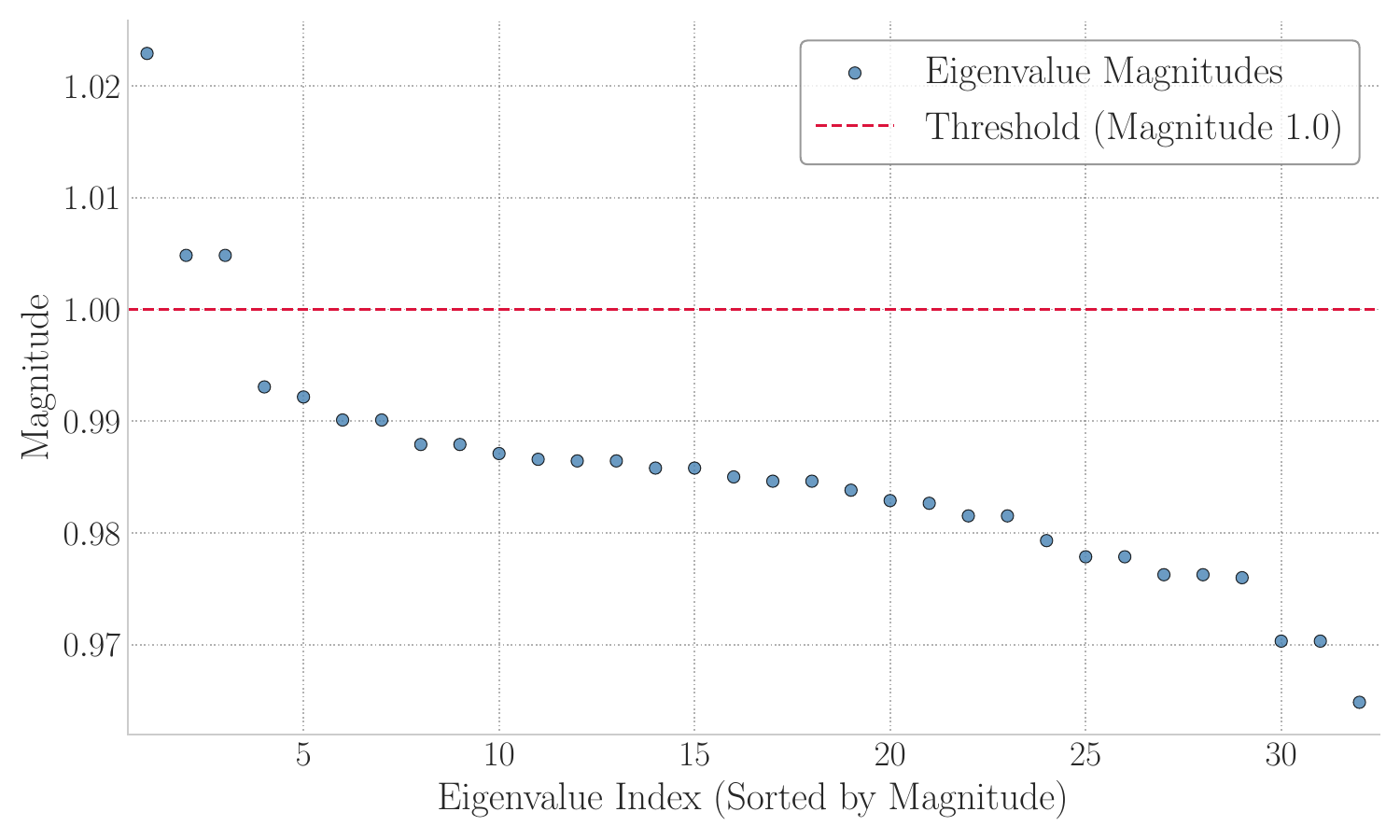}
  \caption{Sorted magnitudes of the eigenvalues of $M$, showing dominant eigenvalues near 1.0.}
  \label{fig:M_heatmap_eigen}
\end{figure}

To further test $M$ as an identity operator on the representations of the learned regimes, we identified $r=18$ cluster centroids $\{c_i\}_{i=1}^{18}$ from the test set embeddings using K-Means. We then computed the relative Euclidean norm of the difference $\|Mc_i-c_i\|_2/\|c_i\|_2$ for each centroid. The error was consistently small, averaging just $0.80\%$ across all centroids, as shown in Figure~\ref{fig:M_centroid_action}. This confirms that $M$ preserves the locations of the regime centroids, aligning with the theoretical prediction that $Mf(x)\approx f(x)$ for $f(x) \in \cV$.

Crucially, this interpretable solution is not unique. Our control experiments with a randomly initialized predictor $M$ converged to the same low loss but yielded a dense, non-identity transformation, while also exhibiting clear visual clustering. This finding does not contradict our theory but instead highlights its core implication: the JEPA loss is invariant to any invertible linear transformation (a change of basis) applied to the latent space. While a random initialization finds an equally valid but entangled basis for the optimal subspace, our identity-initialized experiment proves that a canonical, interpretable solution exists and is a stable optimum. This demonstrates that while JEPA's objective successfully identifies the correct invariant subspace, an inductive bias, such as initializing the predictor toward identity, guides the model to a human-interpretable representation.

\begin{figure}[tbp]
  \centering
  \includegraphics[width=\linewidth]{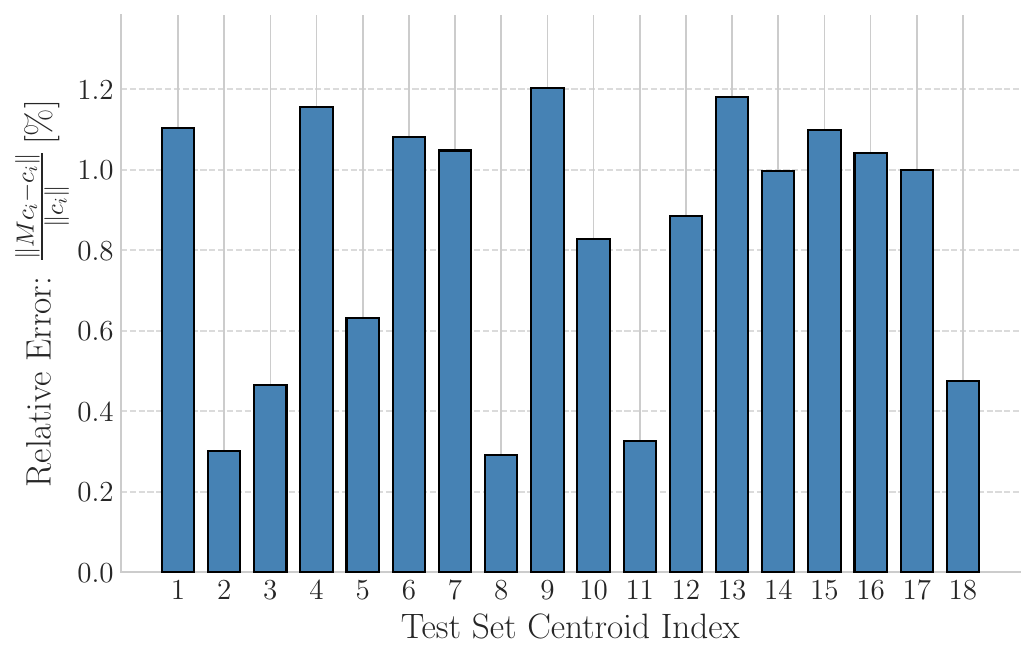}
  \caption{Action of the learned linear predictor $M$ on regime cluster centroids $c_i$. The small relative error indicates that $M$ largely preserves these regime-representative vectors.}
  \label{fig:M_centroid_action}
\end{figure}

The empirical results strongly align with the theoretical framework from Section \ref{sec:theory_main}. JEPA encoders form latent space clusters that directly correspond to the $r$ ground-truth dynamical regimes (Figure \ref{fig:actual_jepa_vs_ae_clustering}), a stark contrast to a standard autoencoder with an identical architecture. This demonstrates that JEPA's predictive objective, not just its capacity, is responsible for uncovering the underlying dynamical structure.

The analysis of the linear predictor $M$ further supports our Koopman-based theory. Its dominant eigenvalues near 1.0 and its near-identity action on cluster centroids both confirm that the predictor learns to preserve the representations of the stable regimes. This is precisely the behavior expected if the encoder $\fenc$ successfully learns functions spanning the invariant subspace $\cV$, whose elements $\psi\in\cV$ satisfy both $\cK\psi=\psi$ and $\psi(x_{t+\Delta})=\psi(x_{t})$ (Lemma \ref{lemma:V_properties_theory}).

Several limitations of the current study should be acknowledged. Our empirical validation relies on synthetic data with well-defined, immiscible regimes, which perfectly fit the finite mixture of ergodic regimes model of Assumption~\ref{ass:mixture_model_theory}. Real-world time series, however, often feature more complex phenomena like gradual transitions or hierarchical structures, and investigating JEPA's behavior under these conditions is an important next step.

On the theoretical side, our analysis made two key idealizations. We utilized a linear predictor for analytical tractability and approximated the EMA dynamics with $\fEMA\approx\fenc$. While our results are strong under these conditions, a more formal treatment of non-linear predictors and a deeper analysis of the EMA's stabilizing role would provide a more complete understanding of its role in stabilizing the learning of these invariants.

\section{Conclusion}
\label{sec:conclusion}

This paper has addressed why Joint-Embedding Predictive Architectures often exhibit emergent clustering of their latent representations according to underlying dynamical regimes in time-series data. We proposed a novel theoretical explanation rooted in Koopman operator theory, hypothesizing that JEPA's objective incentivizes its encoder to learn regime indicator functions. These indicators are invariant eigenfunctions (eigenvalue 1) of the $\Delta$-step Koopman operator and characterize the distinct, dynamically immiscible modes of behavior assumed to be present in the data.

Our theoretical derivation, under idealized conditions including a finite mixture of ergodic regimes and a linear predictor, proves that the JEPA loss is minimized when the encoder learns to span this space of indicators. This provides a first-principles basis for the observed clustering, where inputs from the same regime map to common latent locations. Our empirical results on synthetic data strongly support these predictions, showing clear clustering and the predicted behavior of the matrix $M$.

This work contributes to a deeper principled understanding of JEPA mechanisms grounded in dynamical systems theory. These insights can inspire new SSL objectives that explicitly leverage Koopman theory for more robust and interpretable representation learning. This could lead to improved unsupervised tools for system decomposition, regime identification, and anomaly detection, where deviations from learned regime clusters would signify anomalies.

Future work includes validating these findings on real-world datasets, comparing JEPA's implicit discovery to explicit identification methods, and exploring whether the architecture can learn other Koopman eigenfunctions corresponding to different dynamical modes, such as oscillations or slow decay.

\section*{Acknowledgments}
Funded by the European Union (grant no. 101168880). Views and opinions expressed are however those of the author(s) only and do not necessarily reflect those of the European Union. The European Union can not be held responsible for them. Project website: \url{https://dn-isense.eu/}.

For the purpose of open access (OA), as required by Horizon Europe (HE), the author has applied a CC BY public copyright license to the Author Accepted Manuscript (AAM) version resulting from this submission. 

\bibliography{aaai2026}

@article{LeCun2022PathToAutonomousAI,
  author    = {Yann LeCun},
  title     = {A Path Towards Autonomous Machine Intelligence Version 0.9.2, 2022-06-27},
  journal   = {OpenReview.net},
  year      = {2022},
  url       = {https://openreview.net/pdf?id=BZ5a1r-kVsf}
}

@inproceedings{Assran2023SelfSupervisedLA,
  author={Assran, Mahmoud and Duval, Quentin and Misra, Ishan and Bojanowski, Piotr and Vincent, Pascal and Rabbat, Michael and LeCun, Yann and Ballas, Nicolas},
  booktitle={2023 IEEE/CVF Conference on Computer Vision and Pattern Recognition (CVPR)}, 
  title={Self-Supervised Learning from Images with a Joint-Embedding Predictive Architecture}, 
  year={2023},
  volume={},
  number={},
  pages={15619-15629},
  keywords={Semantics;Computer architecture;Self-supervised learning;Image representation;Transformer cores;Reconstruction algorithms;Transformers;Self-supervised or unsupervised representation learning},
  doi={10.1109/CVPR52729.2023.01499}}

@article{Koopman1931HamiltonianSA,
author = {B. O. Koopman },
title = {Hamiltonian Systems and Transformation in Hilbert Space},
journal = {Proceedings of the National Academy of Sciences},
volume = {17},
number = {5},
pages = {315-318},
year = {1931},
doi = {10.1073/pnas.17.5.315},
URL = {https://www.pnas.org/doi/abs/10.1073/pnas.17.5.315},
eprint = {https://www.pnas.org/doi/pdf/10.1073/pnas.17.5.315}}

@article{Mezic2005SpectralPO,
  author    = {Igor Mezi{\'c}},
  title     = {Spectral Properties of Dynamical Systems, Model Reduction and Decompositions},
  journal   = {Nonlinear Dynamics},
  volume    = {41},
  pages     = {309--325},
  year      = {2005},
  doi       = {10.1007/s11071-005-2824-x}
}

@inproceedings{Chen2020SimpleFS,
author = {Chen, Ting and Kornblith, Simon and Norouzi, Mohammad and Hinton, Geoffrey},
title = {A simple framework for contrastive learning of visual representations},
year = {2020},
booktitle = {Proceedings of the 37th International Conference on Machine Learning},
articleno = {149},
numpages = {11}
}

@inproceedings{Grill2020BootstrapYO,
author = {Grill, Jean-Bastien and Strub, Florian and Altch\'{e}, Florent and Tallec, Corentin and Richemond, Pierre H. and Buchatskaya, Elena and Doersch, Carl and Pires, Bernardo Avila and Guo, Zhaohan Daniel and Azar, Mohammad Gheshlaghi and Piot, Bilal and Kavukcuoglu, Koray and Munos, R\'{e}mi and Valko, Michal},
title = {Bootstrap your own latent a new approach to self-supervised learning},
year = {2020},
isbn = {9781713829546},
booktitle = {Proceedings of the 34th International Conference on Neural Information Processing Systems},
articleno = {1786},
numpages = {14}
}

@article{DMD2010Schmid,
    title={Dynamic mode decomposition of numerical and experimental data}, 
    volume={656}, 
    DOI={10.1017/S0022112010001217}, 
    journal={Journal of Fluid Mechanics}, 
    author={Schmid, Peter J.}, 
    year={2010}, 
    pages={5–28}
}

@inproceedings{VAMPNets2018Mardt,
  author    = {Andreas Mardt and Luca Pasquali and Hao Wu and Frank Noe},
  title     = {{VAMPnets for Deep Learning of Molecular Kinetics}},
  booktitle = {Nature Communications},
  volume    = {9},
  pages     = {5},
  year      = {2018},
  doi       = {10.1038/s41467-017-02388-1}
}

@book{Petersen1983ErgodicT,
place={Cambridge}, 
series={Cambridge Studies in Advanced Mathematics}, 
title={Ergodic Theory}, 
publisher={Cambridge University Press}, 
author={Petersen, Karl E.}, 
year={1983}, 
collection={Cambridge Studies in Advanced Mathematics}}

@book{Eisner2015OperatorTE,
title={Operator Theoretic Aspects of Ergodic Theory}, 
publisher={Springer International Publishing}, 
author={T. Eisner and B. Farkas and M. Haase and R. Nagel}, 
year={2015}, 
doi={10.1007/978-3-319-16898-2}}

@ARTICLE{Lusch2018,
  title     = "Deep learning for universal linear embeddings of nonlinear
               dynamics",
  author    = "Lusch, Bethany and Kutz, J Nathan and Brunton, Steven L",
  journal   = "Nat. Commun.",
  publisher = "Springer Science and Business Media LLC",
  volume    =  9,
  number    =  1,
  pages     = "4950",
  month     =  nov,
  year      =  2018,
  copyright = "https://creativecommons.org/licenses/by/4.0",
  language  = "en",
doi="10.1038/s41467-018-07210-0"
}

@INPROCEEDINGS{Yeung2019,
  author={Yeung, Enoch and Kundu, Soumya and Hodas, Nathan},
  booktitle={2019 American Control Conference (ACC)}, 
  title={Learning Deep Neural Network Representations for Koopman Operators of Nonlinear Dynamical Systems}, 
  year={2019},
  volume={},
  number={},
  pages={4832-4839},
  keywords={},
  doi={10.23919/ACC.2019.8815339}}

@article{Otto2019,
author = {Otto, Samuel E. and Rowley, Clarence W.},
title = {Linearly Recurrent Autoencoder Networks for Learning Dynamics},
journal = {SIAM Journal on Applied Dynamical Systems},
volume = {18},
number = {1},
pages = {558-593},
year = {2019},
doi = {10.1137/18M1177846}
}

@InProceedings{Azencot2020,
  title = 	 {Forecasting Sequential Data Using Consistent Koopman Autoencoders},
  author =       {Azencot, Omri and Erichson, N. Benjamin and Lin, Vanessa and Mahoney, Michael},
  booktitle = 	 {Proceedings of the 37th International Conference on Machine Learning},
  pages = 	 {475--485},
  year = 	 {2020},
  volume = 	 {119},
  series = 	 {Proceedings of Machine Learning Research},
  month = 	 jul
}

@ARTICLE{Korda2018,
  title     = "On convergence of extended dynamic mode decomposition to the
               Koopman operator",
  author    = "Korda, Milan and Mezi{\'c}, Igor",
  journal   = "J. Nonlinear Sci.",
  publisher = "Springer Science and Business Media LLC",
  volume    =  28,
  number    =  2,
  pages     = "687--710",
  month     =  apr,
  year      =  2018,
  language  = "en",
  doi = "10.1007/s00332-017-9423-0"
}

@ARTICLE{Wehmeyer2018,
  title     = "Time-lagged autoencoders: Deep learning of slow collective
               variables for molecular kinetics",
  author    = "Wehmeyer, Christoph and No{\'e}, Frank",
  journal   = "J. Chem. Phys.",
  publisher = "AIP Publishing",
  volume    =  148,
  number    =  24,
  pages     = "241703",
  month     =  jun,
  year      =  2018,
  language  = "en",
doi = "10.1063/1.5011399"
}

@INPROCEEDINGS{He2020,
  author={He, Kaiming and Fan, Haoqi and Wu, Yuxin and Xie, Saining and Girshick, Ross},
  booktitle={2020 IEEE/CVF Conference on Computer Vision and Pattern Recognition (CVPR)}, 
  title={Momentum Contrast for Unsupervised Visual Representation Learning}, 
  year={2020},
  volume={},
  number={},
  pages={9726-9735},
  keywords={Dictionaries;Task analysis;Loss measurement;Unsupervised learning;Buildings;Visualization;Training},
  doi={10.1109/CVPR42600.2020.00975}}

@InProceedings{Seabold2010,
  author    = { {S}kipper {S}eabold and {J}osef {P}erktold },
  title     = { {S}tatsmodels: {E}conometric and {S}tatistical {M}odeling with {P}ython },
  booktitle = { {P}roceedings of the 9th {P}ython in {S}cience {C}onference },
  pages     = { 92 - 96 },
  year      = { 2010 },
  editor    = { {S}t\'efan van der {W}alt and {J}arrod {M}illman },
  doi       = { 10.25080/Majora-92bf1922-011 }
}

@article{vanderMaaten2008,
  author  = {Laurens van der Maaten and Geoffrey Hinton},
  title   = {Visualizing Data using t-SNE},
  journal = {Journal of Machine Learning Research},
  year    = {2008},
  volume  = {9},
  number  = {86},
  pages   = {2579--2605},
  url     = {http://jmlr.org/papers/v9/vandermaaten08a.html}
}

@INPROCEEDINGS{Chen2021,
  author={Chen, Xinlei and He, Kaiming},
  booktitle={2021 IEEE/CVF Conference on Computer Vision and Pattern Recognition (CVPR)}, 
  title={Exploring Simple Siamese Representation Learning}, 
  year={2021},
  volume={},
  number={},
  pages={15745-15753},
  keywords={Visualization;Computer vision;Codes;Shape;Computational modeling;Computer architecture;Tools},
  doi={10.1109/CVPR46437.2021.01549}}

@InProceedings{tian2021understanding,
  title = 	 {Understanding self-supervised learning dynamics without contrastive pairs},
  author =       {Tian, Yuandong and Chen, Xinlei and Ganguli, Surya},
  booktitle = 	 {Proceedings of the 38th International Conference on Machine Learning},
  pages = 	 {10268--10278},
  year = 	 {2021},
  editor = 	 {Meila, Marina and Zhang, Tong},
  volume = 	 {139},
  series = 	 {Proceedings of Machine Learning Research},
  month = 	 {18--24 Jul},
  publisher =    {PMLR}
}

@article{Yue2022, title={TS2Vec: Towards Universal Representation of Time Series}, volume={36}, DOI={10.1609/aaai.v36i8.20881},  number={8}, journal={Proceedings of the AAAI Conference on Artificial Intelligence}, author={Yue, Zhihan and Wang, Yujing and Duan, Juanyong and Yang, Tianmeng and Huang, Congrui and Tong, Yunhai and Xu, Bixiong}, year={2022}, month={Jun.}, pages={8980-8987} }

@misc{gu2022efficientlymodelinglongsequences,
      title={Efficiently Modeling Long Sequences with Structured State Spaces}, 
      author={Albert Gu and Karan Goel and Christopher Ré},
      year={2022},
      eprint={2111.00396},
      archivePrefix={arXiv},
}

@misc{gu2024mambalineartimesequencemodeling,
      title={Mamba: Linear-Time Sequence Modeling with Selective State Spaces}, 
      author={Albert Gu and Tri Dao},
      year={2024},
      eprint={2312.00752},
      archivePrefix={arXiv},
}

@inproceedings{
higgins2017betavae,
title={beta-{VAE}: Learning Basic Visual Concepts with a Constrained Variational Framework},
author={Irina Higgins and Loic Matthey and Arka Pal and Christopher Burgess and Xavier Glorot and Matthew Botvinick and Shakir Mohamed and Alexander Lerchner},
booktitle={International Conference on Learning Representations},
year={2017},
}

@InProceedings{Kim2018,
  title = 	 {Disentangling by Factorising},
  author =       {Kim, Hyunjik and Mnih, Andriy},
  booktitle = 	 {Proceedings of the 35th International Conference on Machine Learning},
  pages = 	 {2649--2658},
  year = 	 {2018},
  editor = 	 {Dy, Jennifer and Krause, Andreas},
  volume = 	 {80},
  series = 	 {Proceedings of Machine Learning Research},
  month = 	 {10--15 Jul},
  publisher =    {PMLR}
}

@article{
bardes2024revisiting,
title={Revisiting Feature Prediction for Learning Visual Representations from Video},
author={Adrien Bardes and Quentin Garrido and Jean Ponce and Xinlei Chen and Michael Rabbat and Yann LeCun and Mido Assran and Nicolas Ballas},
journal={Transactions on Machine Learning Research},
issn={2835-8856},
year={2024}
}

@misc{verdenius2024latpfnjointembeddingpredictive,
      title={LaT-PFN: A Joint Embedding Predictive Architecture for In-context Time-series Forecasting}, 
      author={Stijn Verdenius and Andrea Zerio and Roy L. M. Wang},
      year={2024},
      eprint={2405.10093},
      archivePrefix={arXiv},
      primaryClass={cs.LG},
      url={https://arxiv.org/abs/2405.10093}, 
}
\appendix

\section{Detailed Proofs for Section~\ref{sec:theory_main}}
\label{sec:appendix_proofs_detailed}

This appendix provides the detailed mathematical derivations for the lemmas, theorems, and loss decomposition presented in Section~\ref{sec:theory_main} of the main text. All notation and definitions established therein are adopted here.

\subsection{Proof of Lemma~\ref{lemma:V_properties_theory} (Properties of Regime Indicators and $\cV$)}
\label{app:proof_lemma_V_properties_detailed}

\begin{lemma}[\ref{lemma:V_properties_theory} restated]
Under Assumption \ref{ass:mixture_model_theory} (Finite Mixture of Ergodic Regimes):
\begin{enumerate}[label=(\alph*)]
    \item Each regime indicator $\chi_i$ is an eigenfunction of the $\Delta$-step Koopman operator $\cK$ with eigenvalue 1: $\cK\chi_i = \chi_i$ ($\mu$-a.e.).
    \item Each $\chi_i$ is pathwise invariant over $\Delta$ steps: $\chi_i(x_{t+\Delta}) = \chi_i(x)$ ($\mu$-a.e.).
    \item Consequently, any function $\psi \in \mathcal{V} = \mathrm{span}\{\chi_1, \dots, \chi_r\}$ satisfies $\cK\psi = \psi$ and $\psi(x_{t+\Delta}) = \psi(x)$ ($\mu$-a.e.).
    \item The subspace $\cV$ is precisely the eigenspace of $\cK$ corresponding to the eigenvalue 1, and its dimension is $r$.
\end{enumerate}
\end{lemma}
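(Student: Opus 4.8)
The plan is to prove the four properties in sequence, leveraging the dynamical immiscibility of the regimes established in Assumption~\ref{ass:mixture_model_theory}, since this is the structural fact that makes each indicator behave rigidly under the dynamics. First I would establish property (b), pathwise invariance, because it is the most elementary and implies (a) directly. The key observation is that if a trajectory starts in $\cX_i$, immiscibility forces $x_{t+\Delta} \in \cX_i$ as well, so $\chi_i(x_{t+\Delta}) = 1 = \chi_i(x)$; conversely, if $x \notin \cX_i$ then (using the essential disjointness of the supports) the trajectory lies in some $\cX_j$ with $j \neq i$, giving $\chi_i(x_{t+\Delta}) = 0 = \chi_i(x)$. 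This holds for $\mu$-almost every $x$ because the exceptional sets (the measure-zero overlaps between supports, and any null set of trajectories escaping their regime) are $\mu$-null.

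For property (a) I would simply apply the Koopman operator to $\chi_i$ and use the definition $(\cK\chi_i)(x) = \E[\chi_i(x_{t+\Delta}) \mid x_t = x]$. Since (b) shows $\chi_i(x_{t+\Delta}) = \chi_i(x)$ holds pathwise ($\mu$-a.e.), the conditional expectation of a (a.e.) constant-along-paths quantity is that same quantity, so $(\cK\chi_i)(x) = \chi_i(x)$, i.e.\ $\cK\chi_i = \chi_i$ with eigenvalue $1$. Property (c) then follows immediately by linearity of both $\cK$ and the evaluation map: any $\psi = \sum_i a_i \chi_i \in \cV$ satisfies $\cK\psi = \sum_i a_i \cK\chi_i = \sum_i a_i \chi_i = \psi$, and likewise $\psi(x_{t+\Delta}) = \sum_i a_i \chi_i(x_{t+\Delta}) = \sum_i a_i \chi_i(x) = \psi(x)$ ($\mu$-a.e.).

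Property (d) is the substantive part and I expect it to be the main obstacle, since (a)--(c) only show the containment $\cV \subseteq \ker(\cK - I)$; the reverse inclusion requires the full force of ergodicity. The plan is to argue that the eigenspace of $\cK$ for eigenvalue $1$ consists exactly of functions that are invariant in expectation, and then invoke ergodic decomposition theory: on each ergodic component $\mu_i$, an invariant observable must be $\mu_i$-a.e.\ constant (this is the defining consequence of ergodicity, that invariant sets have measure $0$ or $1$). Hence any $\psi$ with $\cK\psi = \psi$ is a.e.\ constant on each support $\cX_i$, say equal to $c_i$, which means $\psi = \sum_i c_i \chi_i \in \cV$ ($\mu$-a.e.). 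This gives $\ker(\cK - I) \subseteq \cV$ and therefore equality. Finally, because the $\{\chi_i\}$ are supported on essentially disjoint sets, they are linearly independent in $L^2(\cX,\mu)$, so $\dim \cV = r$. The delicate step is the passage from ``$\cK$-invariant'' to ``a.e.\ constant on each ergodic component'': I would need to confirm that the $\Delta$-step dynamics inherit ergodicity on each $\cX_i$ (or argue at the level of the invariant-set characterization) so that the standard ergodic-theoretic dichotomy applies cleanly, and to handle the $L^2$ setting rather than pointwise statements throughout.
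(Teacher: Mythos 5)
Your treatment of (a)--(c) matches the paper's proof in substance: both rest on the immiscibility case analysis ($x \in \cX_i$ versus $x \notin \cX_i$) together with linearity. Your only deviation is ordering --- you prove the pathwise statement (b) first and obtain (a) by taking conditional expectations, whereas the paper argues (a) directly and then notes (b) follows from the same cases; your ordering is arguably cleaner, since (a) really is a corollary of (b).

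Where you genuinely diverge is (d), and this is worth comparing. The paper proves only the easy half ($\dim \ge r$, via linear independence of the $\chi_i$ from essential disjointness of supports) and then \emph{cites} a standard ergodic-theory result to conclude that the eigenvalue-$1$ eigenspace has dimension exactly equal to the number of ergodic components. You instead propose to prove the reverse inclusion $\ker(\cK - I) \subseteq \cV$ directly, which is more self-contained --- but, as you yourself flag, it has an unresolved crux: here $\cK$ is defined by conditional expectation, so $\cK\psi = \psi$ says only that $\psi$ is \emph{harmonic} (invariant in expectation), while the ergodic dichotomy you want to invoke applies to \emph{pathwise}-invariant functions. The gap is fillable by a standard $L^2$ argument: by stationarity,
\begin{equation*}
\E\bigl[(\psi(x_{t+\Delta}) - \psi(x_t))^2\bigr] \;=\; 2\|\psi\|_{L^2(\mu)}^2 - 2\langle \cK\psi, \psi\rangle \;=\; 0 \quad \text{when } \cK\psi = \psi,
\end{equation*}
so harmonic does imply pathwise invariant, after which ergodicity of each component forces $\psi$ to be a.e.\ constant on each $\cX_i$, i.e.\ $\psi = \sum_i c_i \chi_i \in \cV$. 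Your second caveat --- whether each $\mu_i$ is ergodic for the $\Delta$-step dynamics rather than only the one-step dynamics --- is also a genuine issue, not pedantry: a deterministic period-$2$ orbit with $\Delta = 2$ has a two-dimensional fixed space of $\cK_{(\Delta)}$ despite being a single ergodic component, which would falsify (d) as stated. The paper silently reads Assumption \ref{ass:mixture_model_theory} as ergodicity of the $\Delta$-step system, and your proof requires the same reading. With those two points made explicit, your argument is complete and, unlike the paper's, does not outsource the equality in (d) to a citation.
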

\begin{proof}

(a) Let $x \in \cX$. The $\Delta$-step Koopman operator is defined as $(\cK\chi_i)(x) = \E[\chi_i(x_{t+\Delta}) \mid x_t = x]$.
\begin{itemize}
    \item If $x \in \cX_i$: By Assumption~\ref{ass:mixture_model_theory} (dynamical immiscibility), if $x_t = x \in \cX_i$, then $x_{t+\Delta} \in \cX_i$ a.s. Thus, $\chi_i(x_{t+\Delta}) = 1$ a.s. It follows that $(\cK\chi_i)(x) = \E[1 \mid x_t=x] = 1$. Since $x \in \cX_i$, $\chi_i(x) = 1$, so $(\cK\chi_i)(x) = \chi_i(x)$.
    \item If $x \notin \cX_i$: This implies $x \in \cX_j$ for some $j \neq i$, as the supports $\{\cX_l\}_{l=1}^r$ form an essential partition of $\cX$ under $\mu$. By dynamical immiscibility, if $x_t = x \in \cX_j$ ($j \neq i$), then $x_{t+\Delta} \in \cX_j$ a.s., meaning $x_{t+\Delta} \notin \cX_i$ a.s. Thus, $\chi_i(x_{t+\Delta}) = 0$ a.s. It follows that $(\cK\chi_i)(x) = \E[0 \mid x_t=x] = 0$. Since $x \notin \cX_i$, $\chi_i(x) = 0$, so $(\cK\chi_i)(x) = \chi_i(x)$.
    \end{itemize}
Since these cases cover $\cX$ up to a set of $\mu$-measure zero, $\cK\chi_i = \chi_i$ holds $\mu$-a.e.

(b) This follows directly from the same arguments as in (a) concerning dynamical immiscibility:
\begin{itemize}
    \item If $x \in \cX_i$: Then $\chi_i(x)=1$. Since $x_{t+\Delta} \in \cX_i$ a.s., $\chi_i(x_{t+\Delta})=1$ a.s. Thus $\chi_i(x_{t+\Delta})=\chi_i(x)$ a.s. for $x \in \cX_i$.
    \item If $x \notin \cX_i$: Then $\chi_i(x)=0$. Since $x_{t+\Delta} \notin \cX_i$ a.s., $\chi_i(x_{t+\Delta})=0$ a.s. Thus $\chi_i(x_{t+\Delta})=\chi_i(x)$ a.s. for $x \notin \cX_i$.
\end{itemize}
Therefore, $\chi_i(x_{t+\Delta})=\chi_i(x_t)$ holds $\mu$-a.e.

(c) Let $\psi \in \mathcal{V}$. Then $\psi(x) = \sum_{j=1}^r c_j \chi_j(x)$ for some constants $c_j \in \R$. By linearity of $\cK$ and part (a):
\begin{align*}
    (\cK\psi)(x) &= \cK\left(\sum_{j=1}^r c_j \chi_j\right)(x) = \sum_{j=1}^r c_j (\cK\chi_j)(x) \\&= \sum_{j=1}^r c_j \chi_j(x) = \psi(x) \quad (\mu\text{-a.e.})
\end{align*}
Similarly, using part (b):
\begin{align*}
    \psi(x_{t+\Delta}) &= \sum_{j=1}^r c_j \chi_j(x_{t+\Delta}) \\&= \sum_{j=1}^r c_j \chi_j(x) = \psi(x) \quad (\mu\text{-a.e.})
\end{align*}

(d) The functions $\{\chi_1, \dots, \chi_r\}$ are linearly independent in $L^2(\mu)$ because their supports $\cX_i$ are essentially disjoint and $\mu(\cX_i) = \alpha_i > 0$. From part (a), they are all eigenfunctions of $\cK$ with eigenvalue 1. Thus, the eigenspace corresponding to eigenvalue 1 has dimension at least $r$. A fundamental result in ergodic theory states that for a measure-preserving transformation, the dimension of the eigenspace of its Koopman operator corresponding to eigenvalue 1 is equal to the number of ergodic components of the invariant measure \cite{Petersen1983ErgodicT, Eisner2015OperatorTE}. Under Assumption \ref{ass:mixture_model_theory}, there are $r$ such ergodic components $(\cX_i, \mu_i)$. Therefore, the space of functions in $L^2(\mu)$ that are invariant under $\cK$ has dimension $r$.
\end{proof}

\subsection{Derivation of JEPA Loss Decomposition \eqref{eq:loss_decomp_koopman_theory}}
\label{app:loss_decomposition_theory_detailed}
Let $f(x) = \vec{\psi}(x)$ be the $k$-dimensional output of the encoder. Under Assumption \ref{ass:linear_predictor_theory}, the predictor is $g(z) = Mz$. The idealized JEPA loss is $L(f, M) = \E_{x \sim \mu}\bigl[\|M \vec{\psi}(x) - \vec{\psi}(x_{t+\Delta})\|_2^2\bigr]$.

Let $A(x) = M \vec{\psi}(x) - (\cK\vec{\psi})(x)$ and $B(x) = (\cK\vec{\psi})(x) - \vec{\psi}(x_{t+\Delta})$.

Then $M \vec{\psi}(x) - \vec{\psi}(x_{t+\Delta}) = A(x) + B(x)$.

The term inside the expectation in the loss is $\|A(x) + B(x)\|_2^2$.
\begin{align*}
    \|A(x) + B(x)\|_2^2 &= \|A(x)\|_2^2 + \|B(x)\|_2^2 + 2 \langle A(x), B(x) \rangle_2
\end{align*}

Taking the expectation $\E_x \equiv \E_{x \sim \mu}$:
\begin{align*}
    L(f, M) &= \E_x \left[ \|A(x)\|_2^2 \right] + \E_x \left[ \|B(x)\|_2^2 \right] \\&+ 2 \E_x \left[ \langle A(x), B(x) \rangle_2 \right]
\end{align*}

We analyze the cross term $\E_x \left[ \langle A(x), B(x) \rangle_2 \right]$. Using the law of total expectation, $\E_x[\cdot] = \E_{x_t \sim \mu} \left[ \E[\cdot \mid x_t] \right]$.
\begin{align*}
    \E_x \left[ \langle A(x), B(x) \rangle_2 \right] 
    &= \E_{x_t \sim \mu} \left[ \E \left[ \langle M \vec{\psi}(x_t) - (\cK\vec{\psi})(x_t), \right. \right. \\
    &\qquad \left. \left. (\cK\vec{\psi})(x_t) - \vec{\psi}(x_{t+\Delta}) \rangle_2 \mid x_t \right] \right]
\end{align*}

Given $x_t$, the term $A(x_t) = M \vec{\psi}(x_t) - (\cK\vec{\psi})(x_t)$ is fixed, as $\vec{\psi}(x_t)$ is known and $(\cK\vec{\psi})(x_t)$ is a deterministic function of $x_t$.
So, the inner conditional expectation becomes:
\begin{align*}
    &\quad \left\langle M \vec{\psi}(x_t) - (\cK\vec{\psi})(x_t), \E \left[ (\cK\vec{\psi})(x_t) - \vec{\psi}(x_{t+\Delta}) \mid x_t \right] \right\rangle_2 \\
    &= \left\langle M \vec{\psi}(x_t) - (\cK\vec{\psi})(x_t), (\cK\vec{\psi})(x_t) - \E [ \vec{\psi}(x_{t+\Delta}) \mid x_t ] \right\rangle_2
\end{align*}

By definition of the Koopman operator \eqref{eq:koopman_def_framework} applied element-wise, $\E [ \vec{\psi}(x_{t+\Delta}) \mid x_t ] = (\cK\vec{\psi})(x_t)$.

Thus, the second argument of the inner product is $(\cK\vec{\psi})(x_t) - (\cK\vec{\psi})(x_t) = \vec{0}$.

The inner product is therefore 0 for any $x_t$. Consequently, its expectation $\E_x \left[ \langle A(x), B(x) \rangle_2 \right] = 0$.

This leads to the decomposition:
\begin{equation*}
\begin{split}
L(f, M) &= \E_x \bigl[ \|M \vec{\psi}(x) - (\cK\vec{\psi})(x)\|_2^2 \bigr] \\
& + \E_x \bigl[ \|(\cK\vec{\psi})(x) - \vec{\psi}(x_{t+\Delta})\|_2^2 \bigr],
\end{split}
\end{equation*}
which is \eqref{eq:loss_decomp_koopman_theory}.

\subsection{Proof of Theorem~\ref{thm:jepa_learns_V_theory} (JEPA Learns Regime Indicator Functions)}
\label{app:proof_thm_jepa_learns_V_detailed}
\begin{theorem}[\ref{thm:jepa_learns_V_theory} restated]
Let Assumptions \ref{ass:mixture_model_theory} and \ref{ass:linear_predictor_theory} hold. Assume encoder capacity $k \ge r$.
The JEPA loss \eqref{eq:loss_decomp_koopman_theory} achieves its global minimum of 0 if and only if:
\begin{enumerate}[label=(\alph*)]
    \item $(\mathcal{K}f_j)(x) = f_j(x_{t+\Delta})$ for $\mu$-a.e. $x$ and for each component $j=1,\dots,k$. (Term 2 is zero).
    \item $M f(x) = (\mathcal{K}f)(x)$ for $\mu$-a.e. $x$. (Term 1 is zero).
\end{enumerate}

These conditions are simultaneously satisfied if the components $f_j(x)$ of $f(x)$ are chosen from $\mathcal{V}$, and $M$ acts as identity on the subspace of $\R^k$ spanned by $f(\mathcal{X})$.

Specifically, if $f^*(x)=(\chi_1(x), \dots, \chi_r(x), \vec{0}_{k-r})^{T}$, then $L(f^*,M)$ is minimized by any $M*$ whose action on the subspace spanned by the non-zero components of $f^*(\cX)$ is identity and zero elsewhere.
\end{theorem}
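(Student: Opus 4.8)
The plan is to read the result off the orthogonal loss decomposition \eqref{eq:loss_decomp_koopman_theory} and then exhibit an explicit minimizer. First I would observe that both Term~1 and Term~2 are expectations of squared Euclidean norms, hence pointwise nonnegative, so $L(f,M)\ge 0$ for every admissible pair $(f,M)$, and $L(f,M)=0$ if and only if both terms vanish. Because each integrand is nonnegative, a term's expectation is zero precisely when its integrand is zero $\mu$-a.e. This immediately yields the ``if and only if'' characterization: Term~2 vanishes exactly when $(\cK f_j)(x)=f_j(x_{t+\Delta})$ $\mu$-a.e.\ for every component $j$ (condition~(a)), and Term~1 vanishes exactly when $Mf(x)=(\cK f)(x)$ $\mu$-a.e.\ (condition~(b)). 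No extra hypotheses are needed for this equivalence; its content lies entirely in the orthogonality of the decomposition established in Appendix~\ref{app:loss_decomposition_theory_detailed}, which guarantees there is no cross-term trade-off between the two conditions.

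Next I would prove that the stated construction satisfies (a) and (b) simultaneously. Suppose every component $f_j$ lies in $\cV$. By Lemma~\ref{lemma:V_properties_theory}(c), each such component obeys both $\cK f_j=f_j$ and $f_j(x_{t+\Delta})=f_j(x)$ $\mu$-a.e.; reading these componentwise gives $(\cK f)(x)=f(x)=f(x_{t+\Delta})$, so condition~(a) holds automatically. For condition~(b), note that $f(x)\in\mathrm{span}\,f(\cX)$ by definition; hence if $M$ restricts to the identity on this subspace, then $Mf(x)=f(x)=(\cK f)(x)$, and Term~1 vanishes as well.

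I would then verify the explicit optimum. For $f^*(x)=(\chi_1(x),\dots,\chi_r(x),\vec{0}_{k-r})^{T}$ each nonzero component is a regime indicator, so the previous paragraph already secures condition~(a). By the essential disjointness of the supports in Assumption~\ref{ass:mixture_model_theory}, a window $x\in\cX_i$ has $\chi_i(x)=1$ and $\chi_j(x)=0$ for $j\ne i$, so $f^*(x)=e_i$; thus $f^*(\cX)=\{e_1,\dots,e_r\}$ up to a $\mu$-null set and $\mathrm{span}\,f^*(\cX)=\mathrm{span}\{e_1,\dots,e_r\}$. Any $M^*$ acting as the identity on this coordinate subspace (and as zero on its complement) then gives $M^*f^*(x)=f^*(x)=(\cK f^*)(x)$, so $L(f^*,M^*)=0$. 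Since $0$ is a lower bound by the first paragraph and is attained here, it is the global minimum.

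The step I expect to be most delicate is condition~(a), and I would flag why $\cV$ is the ``right'' class to make it hold. Condition~(a) is a pathwise, almost-sure statement --- $f_j$ must be genuinely constant across each $\Delta$-step transition, not merely invariant in expectation. An observable can be perfectly linearly predictable in the sense that $Mf$ matches its conditional mean $\cK f$ (Term~1 $=0$) while still fluctuating around that mean along trajectories (Term~2 $>0$); only functions that are constant on each ergodic component annihilate Term~2. Lemma~\ref{lemma:V_properties_theory}(c) supplies exactly this pathwise invariance, and it is available precisely because the regimes are dynamically immiscible. I would close by noting that I am proving \emph{sufficiency}, not uniqueness: the loss is invariant under any invertible linear reparametrization of the latent space, so the identity-on-$\mathrm{span}\,f^*(\cX)$ predictor is one canonical representative of a whole orbit of minimizers rather than the unique optimizer.
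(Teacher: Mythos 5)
Your proof is correct and follows essentially the same route as the paper's: both read conditions (a) and (b) off the non-negativity of the two terms in the orthogonal decomposition, invoke Lemma~\ref{lemma:V_properties_theory}(c) to get pathwise invariance (hence Term~2 $=0$) and $\cK f = f$ (hence Term~1 $=0$ under an identity-acting $M$), and then verify the explicit pair $(f^*, M^*)$ attains zero. The only difference is that the paper's appendix additionally proves a partial converse---that a zero-loss solution in which $M$ acts as identity on $f(\cX)$ forces each $f_j \in \cV$ via Lemma~\ref{lemma:V_properties_theory}(d)---and derives the block conditions $M_{11}=I_r$, $M_{21}=\mathbf{0}$ as necessary for zero loss with $f^*$, but these go beyond the theorem's literal claims, which your sufficiency argument (with its honest flag that uniqueness is not being claimed) fully covers.
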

\begin{proof}
The loss $L(f,M)$ is a sum of two non-negative terms (squared norms). Thus, $L(f,M)=0$ if and only if both terms are individually zero $\mu$-a.e. This directly gives conditions (a) and (b) of the theorem statement.

Now, we show sufficiency: if components $f_j \in \mathcal{V}$ and $M$ acts as identity on $f(\mathcal{X})$.
If each component $f_j \in \mathcal{V}$, then by Lemma~\ref{lemma:V_properties_theory}(c):
\begin{enumerate}[label=\arabic*.]
    \item $(\mathcal{K}f_j)(x) = f_j(x)$ ($\mu$-a.e.)
    \item $f_j(x_{t+\Delta}) = f_j(x)$ ($\mu$-a.e.)
\end{enumerate}
Substituting these into Term 2 of \eqref{eq:loss_decomp_koopman_theory} for each component:
$$ \|(\mathcal{K}f_j)(x) - f_j(x_{t+\Delta})\|_2^2 = \|f_j(x) - f_j(x)\|_2^2 = 0 \quad (\mu\text{-a.e.}), $$
so Term 2 is zero if $f_j \in \mathcal{V}$ for all $j$.

Substituting $(\mathcal{K}f_j)(x) = f_j(x)$ into Term 1:
$$ \|(Mf)(x)_j - (\mathcal{K}f_j)(x)\|_2^2 = \|(Mf)(x)_j - f_j(x)\|_2^2. $$

If $M$ is chosen such that it acts as the identity transformation on the subspace $S_f = \mathrm{span}\{f(\mathcal{X})\} \subseteq \R^k$ (which, if $f_j \in \mathcal{V}$, means $S_f$ is a subspace determined by linear combinations of $\chi_i$), then $(Mf)(x)_j = f_j(x)$ for $f(x) \in S_f$. This makes Term 1 zero.

Consider the specific case $$f^*(x) = (\chi_1(x), \dots, \chi_r(x), \vec{0}_{k-r})^T$$

Let $\vec{\psi}^*(x) = f^*(x)$. As shown above, Term 2 is zero because each $\chi_j \in \mathcal{V}$ and $0 \in \mathcal{V}$.

Since $(\mathcal{K}\psi_j^*)(x) = \psi_j^*(x)$ for all $j$, Term 1 simplifies to $\mathbb{E}_x \bigl[ \|M \vec{\psi}^*(x) - \vec{\psi}^*(x)\|_2^2 \bigr]$.
We need $M \vec{\psi}^*(x) = \vec{\psi}^*(x)$ $\mu$-a.e.

Let $M^*$ be partitioned as $M^* = \begin{pmatrix} M_{11} & M_{12} \\ M_{21} & M_{22} \end{pmatrix}$, where $M_{11}$ is $r \times r$.
Then $$M^* \vec{\psi}^*(x) = \begin{pmatrix} M_{11}\vec{\chi}(x) + M_{12}\vec{0} \\ M_{21}\vec{\chi}(x) + M_{22}\vec{0} \end{pmatrix} = \begin{pmatrix} M_{11}\vec{\chi}(x) \\ M_{21}\vec{\chi}(x) \end{pmatrix}$$

For this to equal $\vec{\psi}^*(x) = \begin{pmatrix} \vec{\chi}(x) \\ \vec{0} \end{pmatrix}$, we need:
\begin{enumerate}[label=\arabic*.]
    \item $M_{11}\vec{\chi}(x) = \vec{\chi}(x)$ for all $x$. Since $\vec{\chi}(x)$ takes values $e_i \in \R^r$ (standard basis vectors) on $\mathcal{X}_i$, this implies $M_{11}e_i = e_i$ for $i=1,\dots,r$. Thus, $M_{11} = I_r$ (the $r \times r$ identity matrix).
    \item $M_{21}\vec{\chi}(x) = \vec{0}$ for all $x$. This implies $M_{21}e_i = \vec{0}$ for $i=1,\dots,r$. Thus, $M_{21} = \mathbf{0}_{(k-r) \times r}$ (the $(k-r) \times r$ zero matrix).
\end{enumerate}
The blocks $M_{12}$ (size $r \times (k-r)$) and $M_{22}$ (size $(k-r) \times (k-r)$) multiply the zero part of $\vec{\psi}^*(x)$ and thus do not affect the product $M^*\vec{\psi}^*(x)$. They can be arbitrary.
A simple choice for such an $M^*$ is $M_{11}=I_r, M_{21}=\mathbf{0}, M_{12}=\mathbf{0}, M_{22}=\mathbf{0}$ (projection onto first $r$ coordinates) or $M_{11}=I_r, M_{21}=\mathbf{0}, M_{12}=\mathbf{0}, M_{22}=I_{k-r}$ (which is $I_k$ if $r=k$). For any such $M^*$, $L(f^*, M^*) = 0$.

Now for the ``only if" part of conditions (a) and (b) for $f_j \in \mathcal{V}$, under the additional assumption that $M$ acts as identity on $f(\mathcal{X})$ for a zero-loss solution.
If $L(f,M)=0$, then conditions (a) and (b) from the theorem statement must hold:
(a) $(\mathcal{K}f_j)(x) = f_j(x_{t+\Delta})$ a.s.
(b) $(Mf)(x)_j = (\mathcal{K}f_j)(x)$ a.s.
If we further assume that for this zero-loss solution, $M$ effectively acts as an identity on the learned representations, i.e., $(Mf)(x)_j = f_j(x)$ a.s. (this is the ``simplest predictor" case where $g(z)=z$ on the manifold of $f(\mathcal{X})$), then combining with (b) gives $f_j(x) = (\mathcal{K}f_j)(x)$ a.s. So $f_j$ is an eigenfunction with eigenvalue 1.
And combining $f_j(x) = (\mathcal{K}f_j)(x)$ with (a) gives $f_j(x_t) = f_j(x_{t+\Delta})$ a.s.

By Lemma~\ref{lemma:V_properties_theory}(d) (and standard ergodic theory characterization of functions constant along trajectories within ergodic components), a function $f_j \in L^2(\mu)$ satisfying $f_j(x_{t+\Delta}) = f_j(x_t)$ a.s. must be an element of $\mathcal{V}$ (i.e., constant on each $\mathcal{X}_i$). Thus, each $f_j \in \mathcal{V}$.

If $k \ge r$, the encoder $f$ can map $\mathcal{X}$ to an $r$-dimensional subspace of $\R^k$ spanned by $r$ linearly independent combinations of $\{\chi_i\}_{i=1}^r$. For $f(x)$ to distinguish the $r$ regimes, the vector $(f_1(x), \dots, f_k(x))^T$ must take on $r$ distinct values as $x$ varies through $\mathcal{X}_1, \dots, \mathcal{X}_r$. This implies that if $f(x) = A \vec{\chi}(x)$, the $k \times r$ matrix $A$ must have rank $r$. This means $f(x)$ is an invertible linear transformation of $\vec{\chi}(x)$ onto its $r$-dimensional image in $\R^k$.
\end{proof}

\section{Synthetic Regime Generation Parameters}
\label{app:regime_params}

This section details the parameters for the $r=18$ distinct dynamical regimes used in the synthetic dataset. All master sequences have length $L_{master}=1024$. Additive observation noise (parameter noise in generation functions) was set to 0.0, meaning deterministic signal components are noise-free, and ARMA processes only contain their intrinsic process noise (typically $\epsilon_t \sim \mathcal{N}(0,1)$ before scaling by \texttt{statsmodels} parameters). Frequencies for periodic signals are defined relative to $L_{master}$ via cycle counts $c_{low}=7, c_{med}=10, c_{high}=15$, so $f_{type} = 2\pi (c_{type}/L_{master})$. Amplitudes are 1.0 unless specified. $\phi_{rand}$ denotes a random phase $\sim \mathcal{N}(0, \pi^2)$ added per master sequence for sinusoids. For trends, slopes are base values $\pm$ a $\mathcal{N}(0,1)$ component, and intercepts include a $\mathcal{N}(0,\pi^2)$ component.

\begin{itemize}
    \item \textbf{Sine\_LowFreq:} Freq. $f_{low}$.
    \item \textbf{Sine\_MedFreq:} Freq. $f_{med}$.
    \item \textbf{Sine\_HighFreq:} Freq. $f_{high}$.
    \item \textbf{Sine\_LowAmp:} Freq. $f_{med}$, Amp. $0.3$.
    \item \textbf{Sine\_Harmonics:} $0.7 \sin(f_{med} t + \phi_{rand}) + 0.3 \sin(3 f_{med} t + \phi'_{rand})$.
    \item \textbf{Trend\_Up:} Base slope $1.5$.
    \item \textbf{Trend\_Down:} Base slope $-1.5$.
    \item \textbf{AR\_PosStrong:} AR(1), $\phi_1=0.9$.
    \item \textbf{AR\_PosWeak:} AR(1), $\phi_1=0.3$.
    \item \textbf{AR\_Neg:} AR(1), $\phi_1=-0.7$.
    \item \textbf{MA\_Pos:} MA(1), $\theta_1=0.7$.
    \item \textbf{ARMA\_Mixed:} ARMA(1,1), $\phi_1=0.5, \theta_1=-0.4$.
    \item \textbf{Square\_LowFreq:} Period $L_{master}/c_{low}$.
    \item \textbf{Square\_HighFreq:} Period $L_{master}/c_{high}$.
    \item \textbf{Sawtooth\_MedFreq:} Period $L_{master}/c_{med}$, initial phase randomized.
    \item \textbf{Pulses\_Sparse:} Approx. 5 pulses, width $L_{master}/50$, Amp. $2.0$.
    \item \textbf{Sine\_Trend:} $0.8 \sin(f_{med} t + \phi_{rand}) + \text{Trend (base slope 1.0)}$.
    \item \textbf{Sine\_HighNoise:} Sinusoid with $f_{med}$, internal process noise std. dev. approx. $3 \times$ that of ARMA processes. 
\end{itemize}
All ARMA processes are generated using \texttt{statsmodels.tsa.arima\_process.ArmaProcess} \cite{Seabold2010}, ensuring stationarity for the chosen parameters.


\section{Model Architecture Details}
\label{app:model_architectures}

This section provides detailed architectural specifications for the encoder and predictor networks used in the JEPA models.

\subsubsection{Convolutional Encoder (\texttt{ConvEmbedder})}
The online encoder $\fenc$ and target encoder $\fEMA$ utilize a 1D Convolutional Neural Network architecture with the following structure:

\begin{table}[ht]
\centering
\caption{Architecture of the 1D Convolutional Encoder ($f_\theta$). Input sequence length $n_c=768$.}
\label{tab:encoder_arch}
\resizebox{\columnwidth}{!}{%
  \begin{tabular}{@{}lccccc@{}}
    \toprule
    Layer Type       & In Channels & Out Channels & Kernel Size & Stride & Padding \\ \midrule
    Conv1D + ReLU    & 1           & 16           & 7           & 2      & 3       \\
    Conv1D + ReLU    & 16          & 32           & 5           & 2      & 2       \\
    Conv1D + ReLU    & 32          & 64           & 3           & 2      & 1       \\
    Conv1D + ReLU    & 64          & 128          & 3           & 2      & 1       \\
    \midrule
    Flatten          & \multicolumn{5}{l}{Output from last Conv1D layer is flattened.} \\
    \midrule
    Linear    & [Calculated] &  $2 \times k$ & --          & --     & --      \\ 
    \bottomrule
  \end{tabular}%
}
\end{table}

The [Calculated] input features to the first linear layer depend on the output dimensions of the final convolutional layer. The latent dimension $k$ was set to 32.

\subsubsection{MLP Predictor (\texttt{MLPPredictor})}
The non-linear MLP predictor $\fpred$ has the following structure:
\begin{table}[htbp]
\centering
\caption{Architecture of the MLP Predictor ($g_\phi$). Input dimension is $k$.}
\label{tab:mlp_predictor_arch}
\begin{tabular}{@{}lcc@{}}
\toprule
Layer Type       & Input Features & Output Features \\ \midrule
Linear + ReLU    & $k$            & $2 \times k$  \\
Linear + ReLU    & $2 \times k$ &  $2 \times k$ \\ 
Linear           &  $2 \times k$ & $k$ \\ \bottomrule
\end{tabular}
\end{table}

The hidden dimension multiplier was 2 and the number of hidden layers was 1.

\subsection{Linear Predictor (\texttt{LinearPredictor})}
The linear predictor $\fpred$ consists of a single linear layer: $g(z) = Mz$, where $M \in \R^{k \times k}$. The bias term was set to false. For specific experiments, $M$ was initialized as an identity matrix.

\end{document}